\documentclass[12points]{article}

\usepackage[utf8]{inputenc}
\usepackage[T1]{fontenc}
\usepackage[hyphens]{url}
\usepackage{authblk}
\makeatletter
\renewcommand\AB@affilsepx{\quad\protect\Affilfont}
\makeatother
\renewcommand\Affilfont{\fontsize{10}{13}\selectfont}
\usepackage{booktabs}
\usepackage{amsfonts}
\usepackage{pifont}
\usepackage{nicefrac}
\usepackage{microtype}
\usepackage{amsmath, amsthm, amssymb}
\usepackage[
    colorlinks=true,
    linkcolor={red},
    citecolor={blue!50!black},
    urlcolor={blue!70!black}
]{hyperref}
\usepackage{tcolorbox}
\usepackage{fancybox}
\usepackage{graphicx}
\usepackage{listings} 
\usepackage{float}
\usepackage{algorithm}
\usepackage[noend]{algpseudocode}
\usepackage{color}
\usepackage{tikz}
\usepackage{pgfplots}
\usepackage{array}
\usepackage{caption}
\usepackage{subcaption}
\usepackage[textsize=scriptsize]{todonotes}
\usepackage{enumitem}
\usepackage{wrapfig}
\usepackage{etoolbox}
\usepackage{diagbox}
\usepackage{comment}
\usepackage{makecell}
\usepackage{multirow}
\usepackage{bbm, bm}
\usepackage{multicol}
\usepackage{mathpazo}
\usepackage{array, booktabs}
\usepackage{arydshln}
\usepackage{subcaption}
\usepackage{siunitx}
\usepackage{times}
\usepackage{setspace}
\usepackage[normalem]{ulem}
\usepackage[round]{natbib}
\usepackage{fontawesome5}
\usepackage{cleveref}
\usepackage{aliascnt}
\usepackage{xcolor}
\newaliascnt{appendixsection}{section}
\crefname{appendixsection}{appendix}{appendices}
\Crefname{appendixsection}{Appendix}{Appendices}
\usepackage[
  textwidth=6.0in,
  textheight=9.0in,
  left=1.25in,
  top=1in
]{geometry}

\setlist[itemize]{leftmargin=2.5em}
\setlist[itemize]{leftmargin=2.5em}
\setlist[enumerate]{leftmargin=2.5em}
\usepackage[framemethod=TikZ]{mdframed}
\mdfsetup{skipabove=5pt,
innertopmargin=0pt}

\tcbuselibrary{breakable}
\lstdefinestyle{modeloutput}{
  basicstyle=\ttfamily\tiny,
  breaklines=true,
  breakatwhitespace=false,
  breakindent=0pt,
  columns=fullflexible,
  keepspaces=true,
  literate={≈}{{$\approx$}}1
}

\newtcolorbox{responsebox}[1]{
  breakable,
  colback=gray!3,
  colframe=gray!40,
  title={#1},
  fonttitle=\bfseries,
  boxrule=0.4pt,
  arc=1mm,
  left=0.5mm,
  right=0.5mm,
  top=0.5mm,
  bottom=0.5mm,
  before skip=0.3em,
  after skip=0.3em
}

\newtheoremstyle{slplain}
  {.4\baselineskip\@plus.1\baselineskip\@minus.1\baselineskip}
  {.3\baselineskip\@plus.1\baselineskip\@minus.1\baselineskip}
  {\itshape}
  {}
  {\bfseries}
  {.}
  { }
  {}
\theoremstyle{slplain} 

\newtheorem*{definition*}{Definition}
\newtheorem*{theorem*}{Theorem}
\newtheorem{theorem}{Theorem}[section]
\newaliascnt{lemma}{theorem}
\newtheorem{lemma}[lemma]{Lemma}
\aliascntresetthe{lemma}

\newaliascnt{proposition}{theorem}

\aliascntresetthe{proposition}

\newaliascnt{corollary}{theorem}

\aliascntresetthe{corollary}

\newaliascnt{observation}{theorem}

\aliascntresetthe{observation}

\newaliascnt{claim}{theorem}

\aliascntresetthe{claim}

\newaliascnt{definition}{theorem}

\aliascntresetthe{definition}

\newaliascnt{fact}{theorem}

\aliascntresetthe{fact}

\crefname{lemma}{lemma}{lemmas}
\Crefname{lemma}{Lemma}{Lemmas}
\crefname{proposition}{proposition}{propositions}
\Crefname{proposition}{Proposition}{Propositions}
\crefname{corollary}{corollary}{corollaries}
\Crefname{corollary}{Corollary}{Corollaries}

\crefalias{lemma}{lemma}
\crefalias{proposition}{proposition}
\crefalias{corollary}{corollary}

\makeatletter
\newtheorem*{rep@theorem}{\rep@title}
\newcommand{\newreptheorem}[2]{%
\newenvironment{rep#1}[1]{%
 \def\rep@title{#2 \ref{##1}}%
 \begin{rep@theorem}}%
 {\end{rep@theorem}}}
\makeatother

\newreptheorem{theorem}{Theorem}
\newreptheorem{lemma}{Lemma}
\newreptheorem{claim}{Claim}
\newreptheorem{corollary}{Corollary}
\newreptheorem{proposition}{Proposition}

\theoremstyle{definition}

\newtheorem{remark}[theorem]{Remark}

\theoremstyle{plain} 

\numberwithin{equation}{section}

\newcommand{\R}{\mathbb{R}}

\DeclareMathOperator{\E}{\mathbb{E}}
\DeclareMathOperator{\1}{\mathbb{1}}
\DeclareMathOperator{\KL}{\mathrm{KL}}

\renewcommand\bar\overline

\DeclareMathOperator*{\argmax}{arg\,max}

\newcolumntype{C}[1]{>{\centering\let\newline\\\arraybackslash\hspace{0pt}}m{#1}}

\newcommand{\calD}{\ensuremath{\mathcal{D}}}

\newcommand{\calO}{\ensuremath{\mathcal{O}}}

\def\nd/{\textsuperscript{nd}}
\def\rd/{\textsuperscript{rd}}
\def\th/{\textsuperscript{th}}

\makeatletter
\def\nnil{\nil}
\newcounter{prob}

\newcounter{dual}

\makeatother

\newenvironment{prob*}{%
	\csname equation*\endcsname%
	\aligned%
}{%
	\endaligned%
	\csname endequation*\endcsname%
}

\title{InfoSFT: Learn More and Forget Less with Information-Aware Token Weighting}
\author[1]{Mahdi Sabbaghi\thanks{Correspondence can be made to: smahdi@seas.upenn.edu}}
\author[1]{George Pappas}
\author[2]{Adel Javanmard}
\author[1]{Hamed Hassani}

\affil[1]{University of Pennsylvania}
\affil[2]{University of Southern California}
\date{}

\definecolor{sftcolor}{RGB}{0,0,0}
\definecolor{dftcolor}{RGB}{160,70,30}
\definecolor{infocolor}{RGB}{0,81,0}

\begin{document}
\maketitle

\begin{abstract}
Supervised fine-tuning (SFT) provides the standard approach for teaching LLMs new behaviors from offline expert demonstrations. However, standard SFT uniformly fits all samples—including those with low likelihood under the base model—which can disproportionately drive training updates toward overfitting specific samples rather than learning the target behavior. Moreover, adapting to these unlikely samples induces substantial policy shifts that degrade prior capabilities. Existing methods mitigate this by filtering, regenerating, or down-weighting low-likelihood data. In doing so, they often suppress precisely the novel behaviors the base model has yet to learn.

We propose InfoSFT, a principled weighting scheme for the SFT objective that concentrates learning signals on maximally informative, medium-confidence tokens—those neither overly familiar to the base model nor too unlikely to cause instability. Requiring only a one-line modification to the standard token-wise loss, InfoSFT demonstrably improves generalization over vanilla SFT and likelihood-weighted baselines across math, code, and chain-of-thought tasks with diverse model families, while better preserving pre-existing capabilities.
\begin{center}
    \faGithub~\url{https://github.com/Helloworld10011/InfoSFT}
\end{center}
\end{abstract}

\vspace{-6pt}
\section{Introduction}
Large Language Models (LLMs) derive much of their practical effectiveness from multi-stage post-training. Post-training typically combines supervised fine-tuning (SFT) with reinforcement learning (RL). In the first stage, SFT trains the model on offline expert demonstrations, thereby teaching the model new behaviors and broadening its capabilities. 
In the second stage, RL uses reward signals to align the model's outputs by shifting its probability mass  toward higher-reward responses \citep{ouyang2022training,touvron2023llama}. These two stages play complementary roles. While RL refines the model toward preferred responses, the role of SFT remains critical for introducing the new behaviors \citep{yue2025does,casper2023open,yoshihara2025practical,javanmard2026theoretical}.

Despite its importance, 
recent work highlights two primary shortcomings of standard SFT: \textbf{(i)} it often suffers from weak test-time generalization. SFT attempts to imitate all offline demonstrations, including trajectories that contain low-likelihood tokens under the model. These tokens incur large negative-log-likelihood losses and can force the model to overfit specific samples rather than learning the new behavior \citep{wu2025generalization,ren2026rethinking}. In contrast, on-policy approaches like RL, that train on high-reward samples from the model’s own distribution, often generalize better \citep{tajwar2024preference,chu2025sft,chen2025retaining}. \textbf{(ii)} SFT is prone to catastrophic forgetting: adapting the model to new behaviors using low-likelihood offline data causes a large shift in model's output distribution, whereas methods that utilize high-likelihood or on-policy samples stay closer to the base model. This larger shift is associated with degradation of the model's prior capabilities \citep{chen2025retaining,shenfeld2025rl,lai2025reinforcement}. To mitigate these issues, prior work explores reweighting or filtering low-likelihood trajectories and tokens, as well as regenerating them via the model itself \citep{wu2025generalization,li2025beyond,chen2025retaining,shenfeld2026self}. 

However, 
down-weighting or filtering low-likelihood samples entails a tradeoff:
such samples are often the examples needed to teach the model new behaviors, whereas higher-likelihood and on-policy data mainly refine behaviors the model already exhibits \citep{yuan2025mitigating,yue2025does}. This raises a central question: 

\vspace{-2pt}
\begin{quote}
    \emph{How can we leverage offline expert data to learn new behaviors while avoiding overfitting or catastrophic forgetting?}
\end{quote}  
\vspace{-2pt}

We approach this question by studying how to adaptively weight offline samples according to their model likelihood. Standard SFT assigns uniform weight to all samples, while recent methods like DFT \citep{wu2025generalization} assign weights proportional to the model likelihood: \(w \propto q\). While this mitigates overfitting, it simply overlooks the low-likelihood samples that are needed for learning the new behavior.

In this work, we propose a principled weighting scheme for learning from expert data. We formulate supervised fine-tuning through a proximal update framework \citep{schulman2015trust,tomar2020mirror}, asking which tokens are most informative under a budget of distributional shift. We derive a weighting rule that prioritizes fitting the underlying data distribution rather than memorizing individual samples. The resulting weight takes the form \(
    w \propto q \Big[ C - \log \frac{q}{1-q} \Big]_+
\) where \(q\) is model likelihood of the demonstrated token and \(C\) is a proper constant. 
Compared to DFT~\citep{wu2025generalization}, this rule increases the relative importance of low-likelihood expert samples by a factor of $-\log q$. Thus, the optimal update accounts not only for whether an expert sample is likely under the model, but also for how much \emph{information} it provides ($-\log q$ is the surprisal of a sample). Motivated by this perspective, we propose \textbf{InfoSFT}, a simple novel variant of SFT that emphasizes informative low-likelihood tokens,gives little weight to already-solved tokens, and assigns vanishing weight to extremely unlikely predictions as \(q \to 0\).

\begin{figure*}
    \centering
    \includegraphics[width=1.0\linewidth]{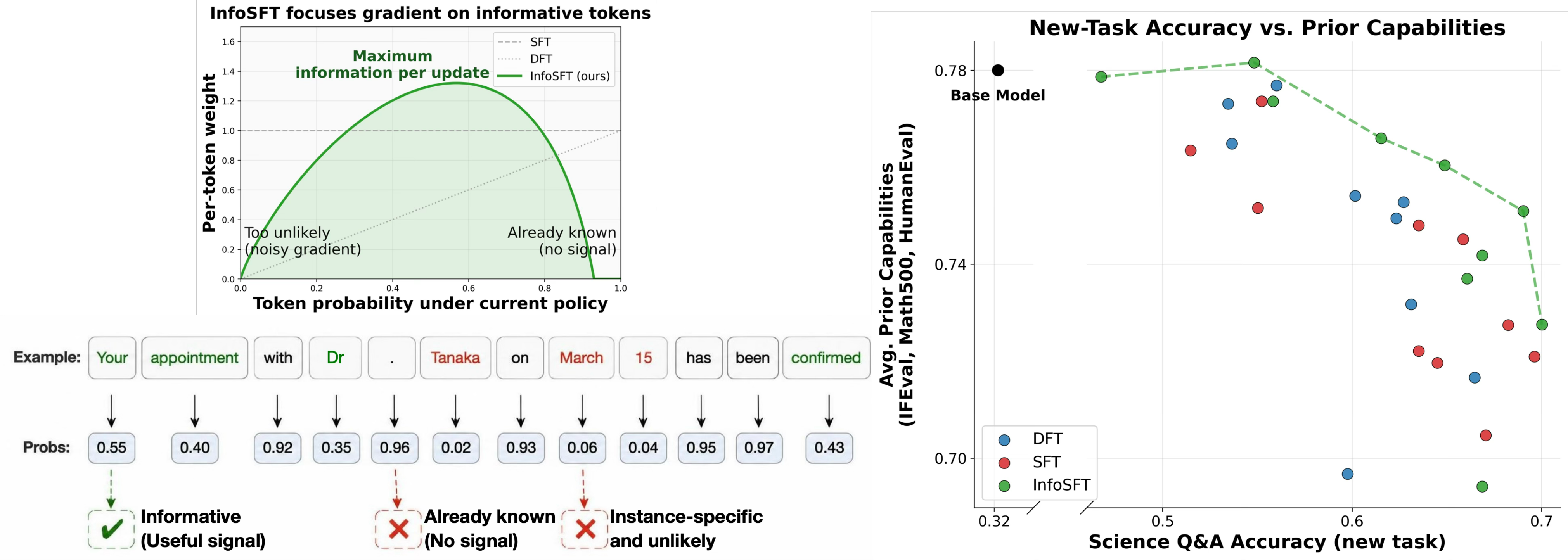}
    \vspace{-8pt}
    \caption{\textbf{(left)} InfoSFT assigns the highest weights to middle-confidence tokens as opposed to SFT that places the weights uniformly, and DFT that favors high-likelihood samples. \textbf{(right)} Shows the results for training on Science Q\&A \citep{shenfeld2026self}. By sweeping over several hyper-parameters like learning-rates and epochs, we show that checkpoints of InfoSFT achieve the best curve on the new-task/prior-capability tradeoff. 
    }
    \label{fig:intro}
    \vspace{-6pt}
\end{figure*}

\noindent\textbf{Our contributions} are summarized as follows: 

\begin{itemize}[leftmargin=0.8em]
    \vspace{-4pt}
    \item \textbf{Best update under a fixedh budget.}
    We study the tradeoff between learning new tasks and retaining  prior capabilities. In \Cref{sec:theory}, we show that, when the update is constrained to remain close to the base model, it must assign the highest weight to samples with intermediate confidence for the best performance gain. This is illustrated in \Cref{fig:intro} (left).
    
    \item \textbf{InfoSFT algorithm.}
    We propose InfoSFT, a simple supervised fine-tuning method that incorporates our weighting rule with only a one-line change to the standard token-wise loss.
    
    \item \textbf{Experimental results.}
    We evaluate InfoSFT across math, code, and CoT fine-tuning settings using several model families and sizes. InfoSFT consistently improves generalization over SFT and DFT in standard math and code fine-tuning, and can be combined with SFT to improve reasoning and get better results on AIME. For catastrophic forgetting, we show that InfoSFT achieves a better new-task/prior-capability tradeoff, as illustrated in \Cref{fig:intro} (right).
\end{itemize}

\vspace{-4pt}
\subsection{Related Work}
\textbf{Supervised fine-tuning.}
Supervised fine-tuning (SFT) is the default method to adapt pretrained LLMs to instructions and new tasks before alignment \citep{ouyang2022training,touvron2023llama,chung2024scaling}. The maximum log-likelihood objective is simple and enables learning from offline curated datasets such as Self-Instruct, Alpaca, and LIMA \citep{wang2023self,alpaca2023,zhou2023lima}. However, SFT also has several weaknesses: it can copy the teacher style without improving on the capabilities \citep{gudibande2023false}, amplify hallucinations when overfitting to the samples \citep{gekhman2024does}, and degrade previously learned capabilities through catastrophic forgetting \citep{luo2025empirical}. Our method studies the supervised-finetuning regime similar to standard SFT, but modifies the loss to avoid overfitting to samples and achieve the optimal point on the new-task/prior-capabilities tradeoff.

\vspace{1pt}
\noindent \textbf{Data and objective modifications.}
A line of work improves offline fine-tuning by modifying either the data or the objective. Data-level methods curate high-quality instructions \citep{zhou2023lima}, filter noisy examples \citep{chen2023alpagasus}, select for diversity \citep{liu2023makes}, use model-based filters \citep{li2024superfiltering,liu2024selectit}, or choose samples influential for a target capability \citep{xia2024less}. Other methods make the data more model-compatible by filtering low-likelihood examples or using self-generated data \citep{chen2025retaining,shenfeld2026self}. 

Objective-level methods instead keep the offline data fixed and change the loss weights. DFT \citep{wu2025generalization} is the closest to our work and rescales SFT by token likelihood. \citet{li2025beyond} also study likelihood-based supervised objectives. Similarly, InfoSFT reweights offline data, but rather than simply down-weighting low-likelihood samples, it derives the optimal weighting rule for learning from offline data and shows that middle-confidence tokens should receive the largest weights.

\vspace{1pt}
\noindent \textbf{On-policy learning.}
On-policy methods such as PPO, GRPO, RLOO, and RAFT are widely used in post-training and RLHF to align the model's distribution with human preferences or reward signals \citep{schulman2017proximal, shao2024deepseekmath,ahmadian2024back,dong2023raft, ouyang2022training}. Recent work suggests these methods often generalize better than standard SFT because they train closer to the model's own test-time distribution \citep{tajwar2024preference,chu2025sft,chen2025retaining}. However, they require rewards or verifiers, and become less effective when the base model cannot already produce the desired behavior or high reward solutions \citep{yue2025does,casper2023open,yoshihara2025practical}. We study the complementary regime where expert demonstrations remain necessary, but should be weighted adaptively to account for the model's current policy.

\vspace{1pt}
\noindent \textbf{Catastrophic forgetting and proximal updates.}
Catastrophic forgetting remains a fundamental challenge \citep{mccloskey1989catastrophic,ratcliff1990connectionist,kirkpatrick2017overcoming} and the primary obstacle for \textit{Continual learning} \citep{de2021continual,wang2024comprehensive}. In LLM post-training, narrow fine-tuning often improves target tasks while degrading general capabilities like safety or instruction-following \citep{scialom2022fine,kotha2023understanding,qi2023fine,huang2024mitigating,luo2025empirical}. Recent work connects forgetting to distributional drift: SFT forgets more than on-policy methods because offline updates induce a larger shift from the base model \citep{chen2025retaining,lai2025reinforcement}, where greater KL distances correlate with stronger forgetting  \citep{shenfeld2025rl,sabbaghi2026robust}. 
Consequently, proximal and mirror-descent frameworks constrain policy changes via the KL to a reference model \citep{schulman2015trust,schulman2017proximal,tomar2020mirror}. We adopt this proximal view, deriving a closed-form weighting rule that maximizes learning under a fixed KL budget.

\vspace{-5pt}
\section{Method}\label{sec:method}
We first provide an overview of SFT and DFT. We then demonstrate how these approaches can be unified under a general weighting function, and discuss how this function can be optimized to improve model performance.

\vspace{-5pt}
\paragraph{SFT.}
We study conditional generation where, given a prompt $x$, a response is a sequence of tokens
\(y=(y_1,\dots,y_{|y|})\), where
\(\pi_\theta(y|x)\!=\!\prod_t \pi_\theta(y_t| x,y_{<t})\).
We are given offline supervised data
\(\mathcal D=\{(x_{(i)},y^*_{(i)})
\}_{i=1}^N\), collected from an expert policy or a teacher model.
We first formulate objectives at the response level, with the final algorithm applying to token-wise weighting. Standard SFT maximizes the log-likelihood:
\begin{equation}
\label{eq:sft}
J_{\mathrm{SFT}}(\theta)
:=
\E_{(x,y^*)\sim \mathcal D}\big[\log \pi_\theta(y^*\mid x)\big]
\end{equation}
Its gradient can be reinterpreted as an expectation over samples from the model (on-policy):
\begin{equation}\label{eq:SFT_importance}
\nabla_\theta J_{\mathrm{SFT}}(\theta)
= \E_{(x,y^*)\sim \calD}\big[ \nabla_\theta \log \pi_\theta(y^*\mid x)\big]
=
\E_{(x,y^*)\sim \mathcal D,\, y\sim \pi_\theta(\cdot \mid x)}
\left[
\frac{\1(y = y^*)}{\pi_\theta(y^* \mid x)}
\nabla_\theta \log \pi_\theta(y \mid x)
\right]
\end{equation}
This is similar to the gradient in the policy gradient (PG) algorithm:
$\nabla_\theta J_{\mathrm{PG}}(\theta) \!=\! \E\big[
{r(x, y)} \nabla_\theta \log \pi_\theta(y | x)
\big]$
Thus, from the policy-gradient perspective, SFT implicitly assigns an effective reward of
\(1/\pi_\theta(y^*\mid x)\) to the expert sample $y^*$ and zero elsewhere. This reward coefficient increases for low-likelihood samples.

\vspace{-4pt}
\paragraph{DFT.}
DFT \citep{wu2025generalization} removes this inverse-likelihood factor by multiplying the supervised objective with the stop-gradient of the likelihood, where \(\mathrm{sg}(\cdot)\) denotes a quantity that is used in the forward pass but treated as constant during backpropagation:
\(
J_\mathrm{DFT}
:=
\E_{(x,y^*)\sim \mathcal D}
\big[\mathrm{sg}(\pi_\theta(y^*|x)) \log \pi_\theta(y^*|x)\big]
\). Its gradient becomes:
\begin{align}\label{eq:DFT_grad}
\nabla_\theta J_{\mathrm{DFT}}(\theta)
&=
\E_{(x,y^*)\sim \mathcal D}
\big[
\pi_\theta(y^*\mid x)\nabla_\theta \log \pi_\theta(y^* \mid x)
\big]\nonumber\\
&=\E_{(x,y^*)\sim \mathcal D,\, y\sim \pi_\theta(\cdot\mid x)}
\big[
\1(y = y^*) \nabla_\theta \log \pi_\theta(y \mid x)
\big]
\end{align}

\noindent Hence, in the supervised objective, DFT assigns smaller weights to low-likelihood samples and larger weights to those likely under the model. Equivalently, in the on-policy perspective according to ~\eqref{eq:DFT_grad}, DFT assigns an effective reward of 1 to expert samples. \citep{wu2025generalization} show that this choice improves training stability. However, it may also underweight samples that are informative for learning the new behavior.

\vspace{-4pt}
\paragraph{A general weighting framework.}
Both objectives can be unified through a general likelihood-dependent weighting function $\Omega(\pi_\theta(y^*| x))$ where $\Omega:[0, 1]\to \R$. 
Throughout, these weights are treated as gradient-free quantities, analogous to a reward function $r(x, y)$ in policy gradient. For brevity, we denote $q:= \pi_\theta(y^*| x)$. Since the weight carries no gradient, we can directly incorporate $\Omega(q)$ into the policy gradient formulation:
\begin{align}\label{eq:J_omega}
\nabla_\theta J_{\Omega}(\theta)
&=
\E_{(x,y^*)\sim \mathcal D,\, y\sim \pi_\theta(\cdot\mid x)}
\big[
\1(y = y^*)\Omega(q)
\nabla_\theta \log \pi_\theta(y \mid x)
\big] \nonumber \\
&=
\E_{(x,y^*)\sim \mathcal D}
\big[
q\Omega(q)\nabla_\theta \log \pi_\theta(y^* \mid x)
\big]
\end{align}
with the corresponding objective: \(J_\Omega := \E_{(x,y^*)\sim \mathcal D}\big[\mathrm{sg}(q\Omega(q)) \log \pi_\theta(y^*| x)\big]\). Thus, \(\Omega(q)\) can be interpreted as the effective reward assigned to the sample within the on-policy expectation. Crucially, the  weighting coefficient---multiplying the gradient of the observed sample---is \(q\Omega(q)\), as shown in the second line. Therefore, DFT corresponds to \(\Omega(q)=1\) and assigns weight \(q\) to the observed sample, while SFT corresponds to \(\Omega(q)=1/q\) yielding uniform weight $1$ across all samples.

To derive the weighting rule, \Cref{sec:theory} compares different choices of \(\Omega\) under a given KL distance budget from the base model.
The objective is to fit the unknown population distribution of expert samples \(p^*(\cdot\mid x)\), rather than overfitting to the single observed sample $y \sim p^*(\cdot \mid x)$. Thus the objective differs from SFT, which seeks to maximize the likelihood of the observed sample. Lemma \ref{lem:oracle} shows that, up to a global scaling factor absorbable into the learning rate, the weighting rule that best fits the population distribution takes the form:
\[
\Omega^*(q)
\propto
\mathrm{logit}\big(p^*(y^*\mid x)\big)-\mathrm{logit}(q),
\qquad
q=\pi_\theta(y^*\mid x), 
\ \
\mathrm{logit}(q):= \log\left(\frac{q}{1-q}\right)
\]
Intuitively, an oracle with access to the expert probabilities compares two confidences for each sample: if a sample is likely under the expert but unlikely under the model, it should be upweighted since it represents a behavior the model has not learned. In contrast, if a sample is more likely under the model than under the expert, it should not be further reinforced. Even correct samples may be relatively rare under the expert when multiple valid responses exist; thus, pushing every observed response toward probability one risks overfitting to individual samples rather than capturing the expert population distribution.

\vspace{-10pt}
\paragraph{InfoSFT.}
The oracle rule depends on the unknown expert probability \(p^*(y^*|x)\), 
so it cannot be implemented directly. We replace $p^*(y^*|x)$ with its average value $\bar p := \E_{x, y^*}[p^*(y^*|x)]$. Accordingly, the quantity $\mathrm{logit}(p^*(y^*|x))$ is approximated by the constant term $\mathrm{logit}(\bar p)$. This means that InfoSFT compares the model confidence with an average confidence level instead of the sample-specific expert confidence.
Lemma \ref{lem:expected} shows that our choice for the constant is near-optimal and yields a provable improvement in the population over both DFT and SFT. Following this approach, we set:
\[
w_{\mathrm{info}}(q) 
= q \Omega(q) = 
q(\mathrm{logit}(\bar p)-\mathrm{logit}(q))
\]

\noindent For clarity, the quantities used above play different roles. The function \(\Omega(q)\) is the coefficient in the on-policy view (first line of \Cref{eq:J_omega}), while
\(
w(q):=q\Omega(q)
\)
is the actual multiplier that appears behind gradients of samples in supervised training (second line of \Cref{eq:J_omega}).

We now apply this rule token-wise. Our results in \Cref{sec:theory} hold for token probabilities without any changes; the same weighting logic is applied to each conditional next-token prediction as individual samples, and the weighting rule applies to \(q_t \!= \! \pi_\theta(y_t^*|x,y^*_{<t})\). Note that the rule becomes negative when \(q_t\) is larger than $\bar p < 1$.
At the token level, there are many trivial tokens such as ``is'' or ``of'' that have $q \approx 1$, for which $w_{\mathrm{info}}(q) \to -\infty$. This would decrease the probability of such trivial tokens and harm fluency and answer quality. To avoid this, we clip the weighting rule to be positive. This clipping is a practical token-level heuristic: it leaves the theoretically derived positive-weight region unchanged, but turns negative updates into zero weight rather than penalizing already-confident tokens. By plugging the clipped rule into \Cref{eq:J_omega}, we obtain the final gradient (for brevity, we denote $\pi_\theta(y_t^*\mid x,y^*_{<t})$ with $\pi_\theta(y_t^*)$): 
\vspace{-2pt}
\begin{equation}\label{eq:InfoSFT_grad}
\boxed{
\nabla_\theta J_{\mathrm{InfoSFT}}
=
\E_{(x,y^*)\sim \mathcal D}
\left[
\frac{1}{|y^*|}
\sum_{t=1}^{|y^*|}
\underbrace{
\color{sftcolor}{\nabla_\theta \log \pi_\theta(y_t^*)}
}_{\text{\color{sftcolor}{1. Standard SFT}}}
\;
\underbrace{
\color{dftcolor}{\pi_\theta(y_t^*)}
}_{\text{\color{dftcolor}{2. DFT weight}}}
\;
\underbrace{
\color{infocolor}{
\Big[
\mathrm{logit}(\bar p)
-
\mathrm{logit}(\pi_\theta(y_t^*))
\Big]_+
}
}_{\text{\color{infocolor}{3. InfoSFT correction}}}
\right]
}
\end{equation}

The scalar \(\bar p\) is a calibration constant; in experiments, we estimate it from the model's average next-token confidence on its own generated responses. 
We study this experimentally in \Cref{sec:experiments}.

Note that InfoSFT has a middle-confidence weighting profile as shown in \Cref{fig:intro}. For small \(q\),
\(w_{\mathrm{InfoSFT}}(q)\) behaves like
\(q(\log(1/q))\), so relative to DFT this approach introduces an additional information-dependent factor while preserving the property that the weight vanishes as \(q\to0\). Moreover, for high-confidence tokens, the clipped correction term becomes zero, thereby avoiding unnecessary reinforcement of tokens that the model already predicts well.

\begin{remark}
     In \Cref{sec:entropy}, we show that InfoSFT is approximately equivalent to adding an entropy correction term to DFT. 
     This emphasizes expert trajectories currently unlikely under the model.
\end{remark}

\begin{remark}
InfoSFT is optimal under any KL budget from the base model. The effective budget in practice is determined by training hyperparameters such as the learning rate and the number of epochs.
\end{remark}

\section{Experiments}\label{sec:experiments}

We study two questions in our experiments: \textbf{(i)} whether InfoSFT improves test-time generalization after fine-tuning, and \textbf{(ii)} whether it better preserves prior capabilities, i.e., is less susceptible to catastrophic forgetting. For each dataset and model, we keep the learning rate, number of update steps, and all other training hyperparameters fixed across methods.

\vspace{-10pt}
\paragraph{Experiment overview.}
We first replicate the fine-tuning settings of \citet{wu2025generalization} on several models and datasets to evaluate generalization at test-time benchmarks. In the same setting, we also study how pass@\(k\) and output entropy behave across methods. We then evaluate InfoSFT for chain-of-thought (CoT) fine-tuning using reasoning traces from DeepSeek \citep{openr1math220k,guo2025deepseek}. Finally, we replicate the setting of \citet{shenfeld2026self} using their custom datasets, which are designed to measure the tradeoff between learning a new task and preserving prior capabilities.

\vspace{-10pt}
\paragraph{Baselines.}
Our primary baselines are standard SFT and DFT~\citep{wu2025generalization}, corresponding to uniform weighting and likelihood-proportional weighting, respectively. For each dataset, all methods are trained under the same hyperparameter setting.

\vspace{-10pt}
\paragraph{Hyperparameter selection.}\label{sec:pbar}
As discussed in \Cref{sec:method}, InfoSFT replaces the point-wise expert probability \(p^*\) with its average, following Lemma~\ref{lem:expected}. Since we do not assume access to teacher logits or the underlying expert distribution, we estimate this average using the student model's token probabilities, conditioned on correct student responses. For a range of models and datasets, we measure the average token probability at temperature \(0.7\)  (matching our evaluation settings). As shown in \Cref{table:pbar}, all estimates fall in the narrow range \([0.9,0.95]\), suggesting that this quantity is stable across models and tasks. We therefore sweep \(\bar p\) around this interval  to further investigate the optimal value for this parameter. We report pass@1 for each value in \Cref{fig:pbar_realtive}. Although the best value sometimes shifts between \(\bar p=0.9\) and \(\bar p=0.95\), \uline{\(\bar p=0.93\) is consistently close to the peak. We thus use \(\bar p=0.93\) as the default for all experiments.}

\subsection{Test-time Performance of InfoSFT}\label{sec:performance}

We evaluate InfoSFT on Qwen-2.5-Math-1.5B, Qwen-2.5-Math-7B, and Llama-3.1-8B \citep{yang2024qwen2,yang2024qwen-math,grattafiori2024llama} to cover different model families and sizes. For math fine-tuning, we train on 100K samples from ``NuminaMath-CoT'' \citep{li2024numinamath}. For code fine-tuning, we use 12K code-related samples from ``UltraFeedback'' \citep{cui2023ultrafeedback}, selecting the highest-scored responses. In both settings, we train for one epoch, following \citet{wu2025generalization}. The learning rate is \(5\times 10^{-5}\) for Qwen-2.5-Math-1.5B, and \(1\times 10^{-5}\) for Qwen-2.5-Math-7B and Llama-3.1-8B. For code fine-tuning on UltraFeedback, we use LoRA with rank \(32\), \(\alpha=64\), and learning rate \(2\times 10^{-5}\) for all models.

For math evaluation, we use ``MATH500'' \citep{hendrycks2021measuring} and ``AIMO-Validation-AMC'' \citep{aimovalidationamc}, which contains 83 problems from AMC 12 2022 and 2023. For coding, we evaluate on ``HumanEval'' and ``MultiPL-E'' \citep{chen2021evaluating,cassano2022multipl}.

\paragraph{Main results.}
We report the results for both math and code fine-tuning in \Cref{table:main_results}. On math, InfoSFT consistently outperforms SFT and DFT across all three base models and all reported MATH500 and AMC metrics. We also observe stronger resutls on AMC: on the Qwen models, SFT and DFT often improve MATH500 while decreasing AMC performance, whereas InfoSFT improves the performace on AMC by 6 points for Qwen-1.5B. Notably, on Qwen-2.5-Math-7B, SFT gives only a marginal gain over the base model on MATH500 acc@1, while InfoSFT improves it substantially. InfoSFT also yields stronger pass@8 results than the baselines, which is important for later stages such as online RL or best-of-\(n\) sampling, where performance depends on producing at least one high-quality response among multiple samples.

For code, InfoSFT gives comparable or better results than DFT. On the Qwen models, DFT and InfoSFT achieve similar performance. On Llama-3.1-8B, InfoSFT performs best, with about a 3-point advantage over DFT on HumanEval. Overall, these results show that InfoSFT is the most consistently beneficial method across the fine-tuning settings we evaluate.

We present training token accuracy plots in \Cref{sec:tok_accuracy}. Although SFT reaches higher training token accuracy, InfoSFT achieves better test-time performance. This indicates that fitting the training tokens more closely does not necessarily lead to better generalization.

\begin{table*}[t!]
  \centering
  \label{table:main_results}
  \vspace{-4pt}
  \resizebox{\columnwidth}{!}{%
    \begin{tabular}{@{}lcccccccc@{}}
      \toprule
      & \multicolumn{4}{c}{Math Training on Numina-Math} 
      & \multicolumn{4}{c}{Instruction fine-tuning on UltraFeedback} \\
      & \multicolumn{2}{c}{MATH500} 
      & \multicolumn{2}{c}{AMC} & \multicolumn{3}{c}{HumanEval} & \multicolumn{1}{c}{MultiPL-E} \\
      \cmidrule(lr){2-3} \cmidrule(lr){4-5} \cmidrule(lr){6-8} \cmidrule(lr){9-9} 
      Models & acc@1 & pass@8 & acc@1 & pass@8 & HE &  HE+ & pass@8 & Avg.(8 lang) \\
      \midrule
      Qwen-Math-1.5B (base)& 33.2 & 77.5 & 34.9 & 63.4 & 40.9 & 35.4 & 56.7 & 26.0 \\
      Qwen-Math-1.5B (\textbf{SFT}) & 61.6 & 81.1 & 31.3 & 54.4 & 43.0 & 36.6 & 58.8 & 28.0 \\
      Qwen-Math-1.5B (\textbf{DFT}) & 59.2 & 77.0 & 30.1 & 56.4 & \textbf{46.7} & 40.3 & \textbf{58.9} & \textbf{29.4} \\
      Qwen-Math-1.5B (\textbf{InfoSFT})& \textbf{66.2} & \textbf{84.3} & \textbf{41.0} & \textbf{67.5} & \textbf{46.7} & \textbf{41.2} & 58.8 &  28.6 \\
      \midrule
      Qwen-Math-7B  (base) & 52.8 & 82.9 & 39.8 & 68.0 & 65.2 & 59.8 & 71.5 & 30.2\\
      Qwen-Math-7B (\textbf{SFT}) & 53.4 & 84.4 & 39.9 & 62.1 & 66.5 & 59.4 & 71.6 & 28.4\\
      Qwen-Math-7B (\textbf{DFT}) & 65.4 & 83.2 & 34.1 & 58.9 & \textbf{69.5} & \textbf{64.0} & 70.3 & \textbf{35.2}\\
      Qwen-Math-7B (\textbf{InfoSFT}) & \textbf{69.7} & \textbf{87.2} & \textbf{43.4} & \textbf{69.1} & 68.9 & \textbf{64.0} & \textbf{71.9} & 34.1 \\
      \midrule
      Llama-3.1-8B  (base) & 3.2 & 20.5 & 0.9 & 6.8 & 38.4 & 31.1 & 66.0 &  31.0 \\
      Llama-3.1-8B (\textbf{SFT}) & 24.0 & 57.4 & 8.4 & 31.7 & 40.9 & 34.5 & \textbf{66.3} & 32.0\\
      Llama-3.1-8B (\textbf{DFT}) & 15.5 & 29.6 & 6.0 & 15.2 & 41.8 & 35.0 & 54.4 & \textbf{32.6}\\
      Llama-3.1-8B (\textbf{InfoSFT}) & \textbf{27.8} & \textbf{58.8} & \textbf{13.2} & \textbf{35.2} & \textbf{44.8} & \textbf{37.8} & 64.6 & 32.2 \\
      \bottomrule
    \end{tabular}
  }
  \vspace{-6pt}
  \caption{Comparison of SFT, DFT, and InfoSFT across math and code.  \uline{All the reported numbers are averaged over 3 seeds.} InfoSFT has dominant math performance across all base models, improving both acc@1 (temp=0) and pass@8 (temp=0.7). InfoSFT achieves competitive results on code-generation benchmarks as well. The pass@8 gains suggest that InfoSFT improves the learned policy beyond greedy decoding.}
  \vspace{-5pt}
\end{table*}

\paragraph{SFT underperforms at pass@1 but preserves diversity.}
Prior studies \citep{wu2025generalization, shenfeld2026self} compare SFT with methods like DFT at pass@1 to showcase the shortcoming of SFT. Our results for Qwen-Math (\Cref{fig:pass@k_entropy}) confirm that SFT has a lower pass@1 at T=0.7, but this does not mean that SFT learns less; SFT outperforms DFT at greedy decoding (acc@1, T=0) and in higher pass@k metrics as shown in \Cref{table:main_results}. This difference is because DFT entropy converges to zero (\Cref{fig:pass@k_entropy}, right), resulting in deterministic outputs, while SFT maintains high entropy. 
Consequently, DFT improves pass@1 performance because it is less susceptible to sampling noise at higher temperatures. However, it degrades pass@k performance (\Cref{fig:pass@k_entropy}, left) by disproportionately favoring high-likelihood samples. This neglects less probable, yet correct, responses that provide valuable diversity when evaluating multiple samples. As previously discussed, strong pass@k results are critical for stages after supervised fine-tuning.

In contrast, \Cref{fig:pass@k_entropy} shows that InfoSFT achieves the best pass@\(k\) performance. Notably, it also avoids the entropy collapse observed for DFT, preventing the model's responses from becoming overly deterministic.

\begin{figure*}
    \centering
    \begin{subfigure}{0.333\linewidth}
        \includegraphics[width=\linewidth]{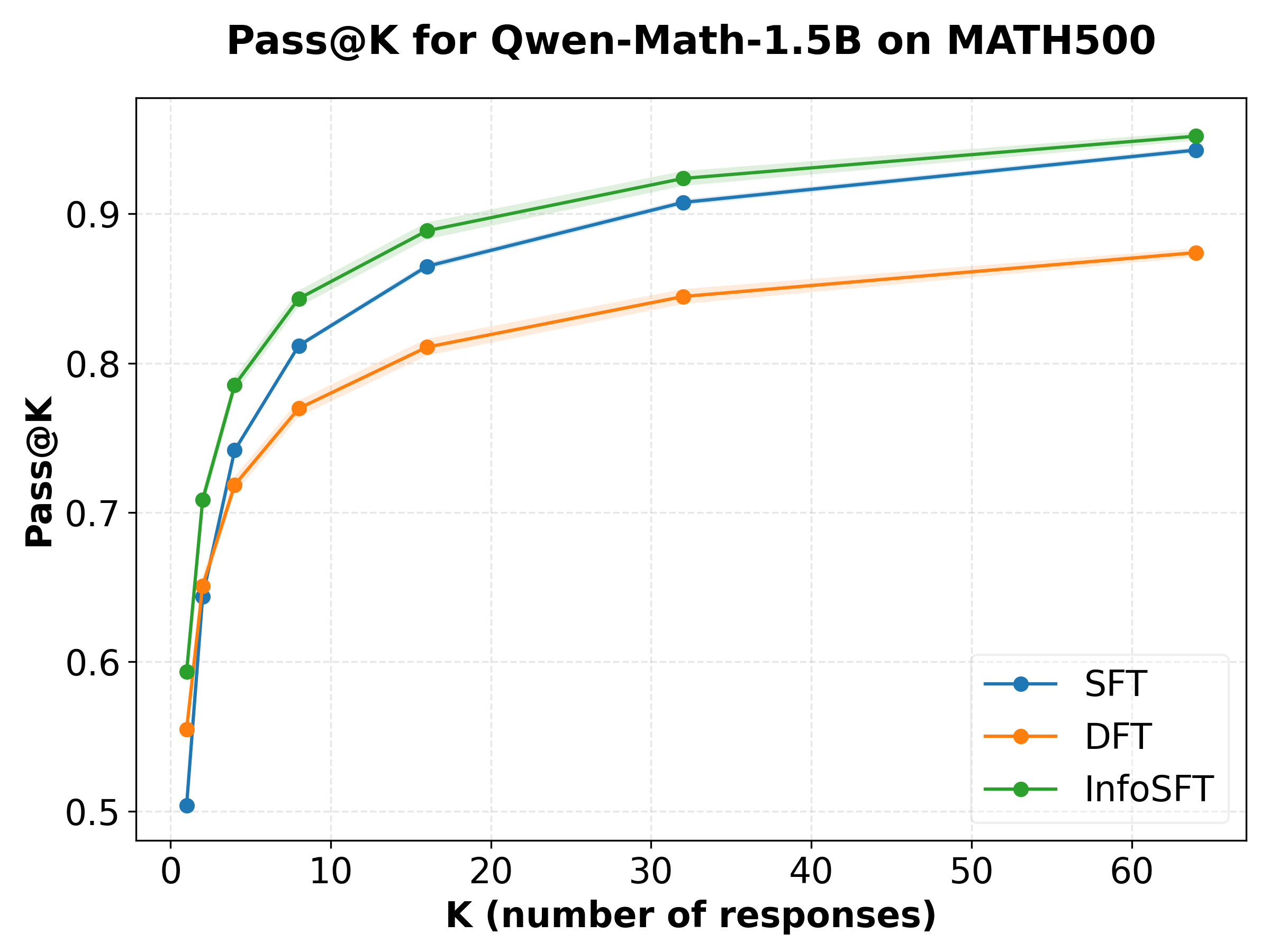}
    \end{subfigure}%
    \begin{subfigure}{0.333\linewidth}
        \includegraphics[width=\linewidth]{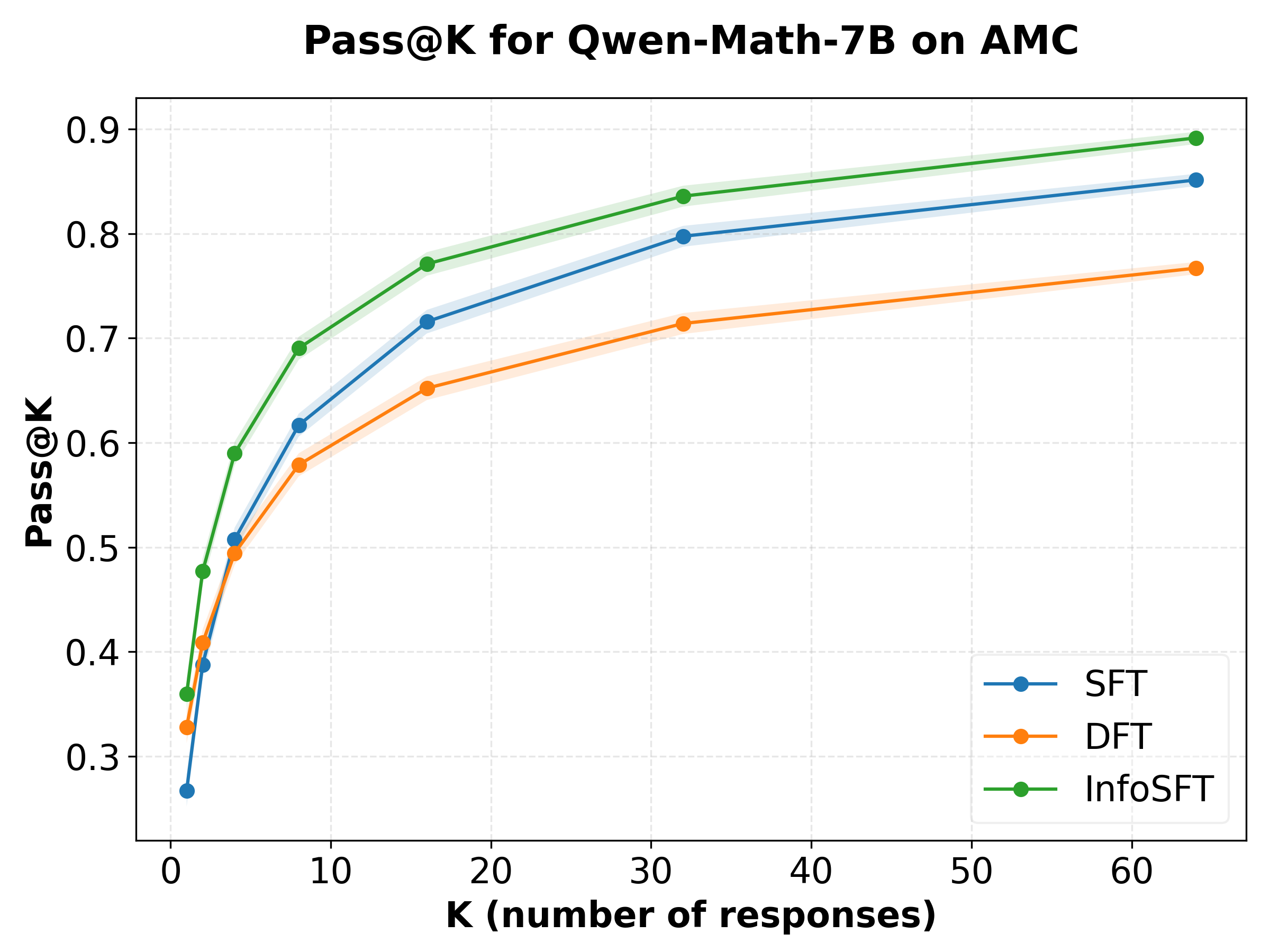}
    \end{subfigure}%
    \begin{subfigure}{0.333\linewidth}
        \includegraphics[width=\linewidth]{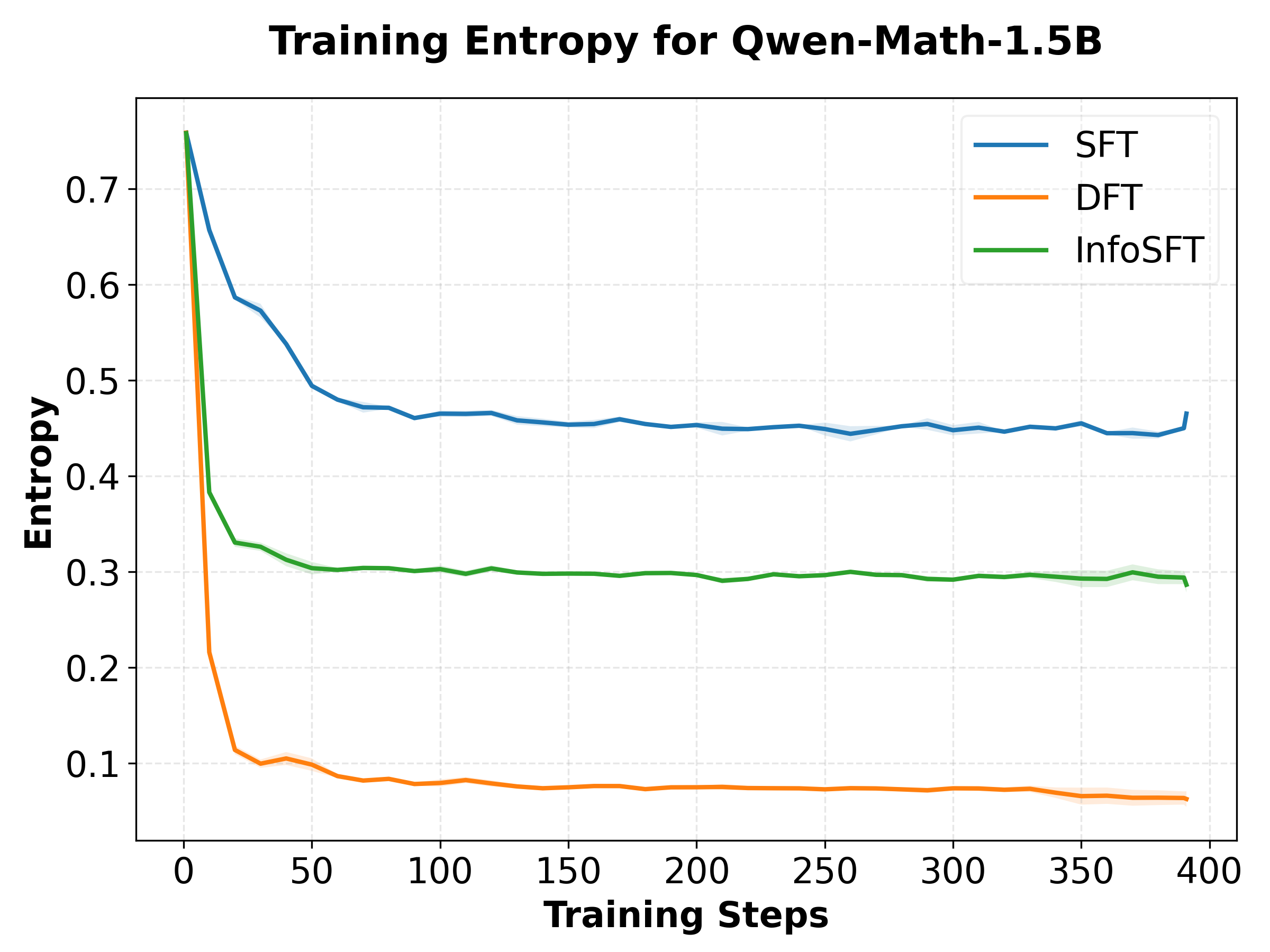}
    \end{subfigure}
    \vspace{-15pt}
     \caption{\textbf{(left/middle)} Show pass@k with $k\in[1, 64]$ for Qwen-Math-1.5B and Qwen-Math-7B trained with the three methods. InfoSFT is higher than other baselines for all k. \textbf{(right)} Unlike DFT, InfoSFT controls the entropy and avoids mode collapse. This randomness is crucial for any later training or alignment of the model.}
     \vspace{-3pt}
     \label{fig:pass@k_entropy}
\end{figure*}

\paragraph{SFT and InfoSFT are complementary for difficult samples.}
In addition to the results in \Cref{table:main_results}, we study CoT fine-tuning on reasoning data from ``OpenR1-Math'' \citep{openr1math220k}, generated by DeepSeek-R1 \citep{guo2025deepseek}. The responses follow the format
\[
\texttt{<think> reasoning trace </think> answer}.
\]
This setting requires the model to learn a new reasoning format that is initially unlikely under the model, making it challenging for methods that favor already likely tokens. We train Qwen-2.5-7B-Instruct with \( \mathrm{lr} = 5\times 10^{-6}\) for two epochs on 70K samples from the default OpenR1 subset with completion\_length \(\leq 8192\). We evaluate on AMC and AIME24 \citep{aimovalidationamc,aimovalidationaime} with a \(16192\)-token generation limit.

\Cref{fig:openR1} (left) shows that SFT outperforms InfoSFT and DFT in this setting since SFT does not downweight the unlikely tokens needed to learn the thinking format. Nevertheless, InfoSFT still improves over the base model on AMC while remaining close to the base model's style (see \Cref{sec:examples}). Motivated by this observation, we first train the model with SFT for one epoch to increase the likelihood of tokens that are unlikely under the base model, such as \texttt{<think>} and \texttt{</think>}, and then continue training the resulting model for one additional epoch with InfoSFT. This two-stage procedure yields the best AIME results, especially in pass@8, while two epochs of plain SFT remain best on AMC. Moreover, \Cref{fig:openR1} (right) shows that even 50 steps of InfoSFT after the initial SFT stage improve AIME by 3.5 points. This supports the view that the two objectives are complementary: SFT is effective for introducing very unlikely formats, while InfoSFT focuses training on middle-confidence tokens once those formats become more likely under the model.

\begin{figure}[t]
    \centering

    \begin{subfigure}[c]{0.57\linewidth},
        \centering
        \resizebox{\linewidth}{!}{%
        \begin{tabular}{@{}lcccc@{}}
            \toprule
            & \multicolumn{2}{c}{AMC} & \multicolumn{2}{c}{AIME} \\
            \cmidrule(lr){2-3} \cmidrule(lr){4-5}
            Method & acc@1 & pass@8 & acc@1 & pass@8 \\
            \midrule
            Base (Qwen-2.5-7B-Instruct) & 43.7 & 70.5 & 16.3 & 33.3 \\
            SFT-epoch1 & 49.2 & 75.9 & 14.6 & 40.0 \\
            SFT-epoch2 & \textbf{52.4} & \textbf{80.7} & 20.0 & 43.3 \\
            DFT & 38.1 & 62.7 & 8.8 & 16.7 \\
            InfoSFT & 47.5 & 73.5 & 13.8 & 26.7 \\
            SFT-epoch1 + DFT(1 epoch)& 33.6 & 59.0 & 12.5 & 30.0 \\
            SFT-epoch1 + InfoSFT(1 epoch)& 50.9 & 76.4 & \textbf{20.8} & \textbf{50.0} \\
            \bottomrule
        \end{tabular}%
        }
    \end{subfigure}
    \hspace{0.15in}
    \begin{subfigure}[c]{0.33\linewidth}
        \centering
        \captionsetup{skip=1pt}
        \includegraphics[width=\linewidth]{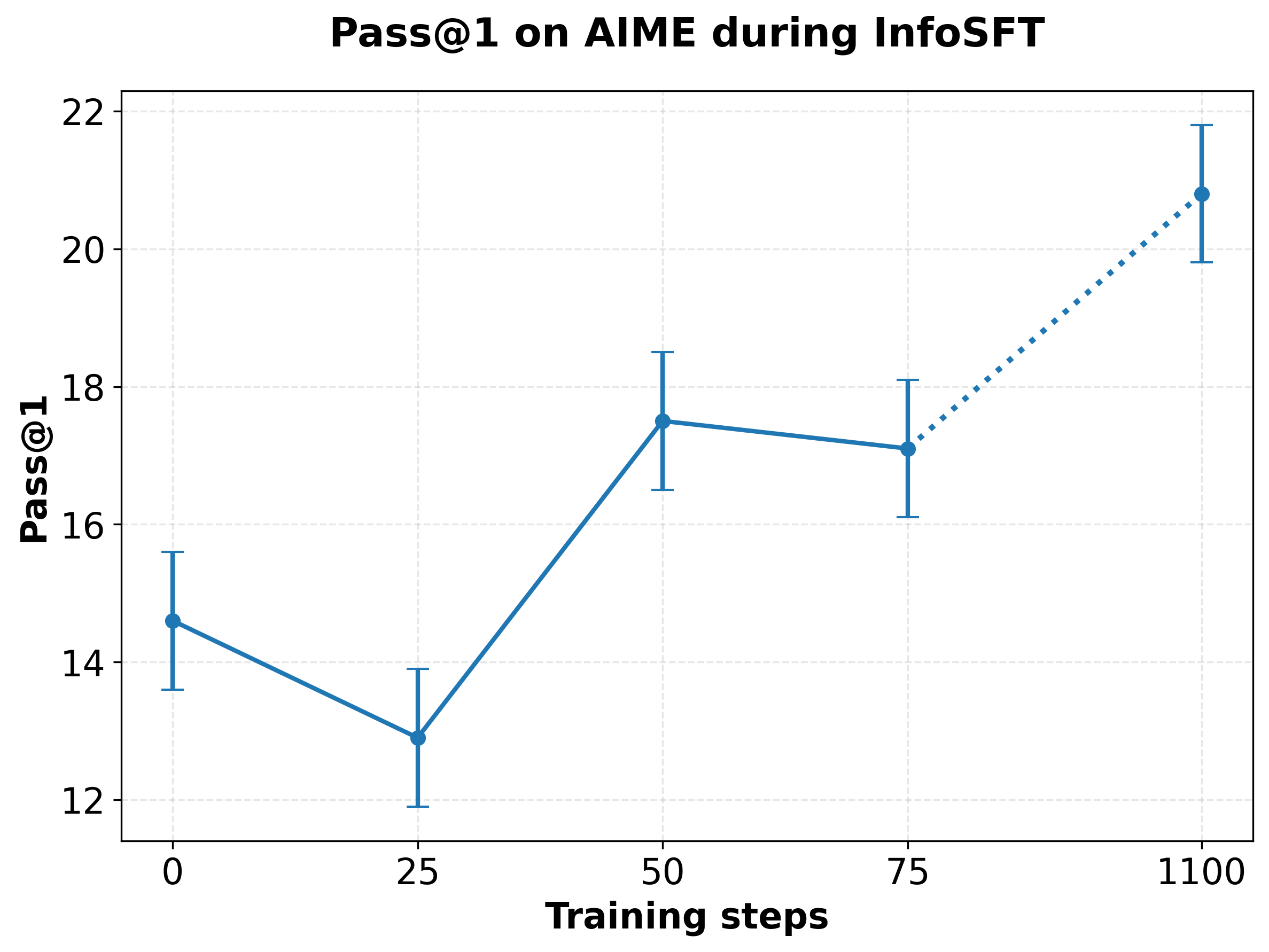}
    \end{subfigure}

    \vspace{-6pt}
    \caption{\textbf{(left)} On reasoning samples from OpenR1 \citep{openr1math220k} with low likelihood under the base model, SFT does better since both DFT and InfoSFT down-weight the unlikely tokens such as ``<think>'' and do not learn the thinking format. However, unlike DFT, InfoSFT still improves over the base model while keeping the base model's format. However, applying 1 epoch of InfoSFT after the first epoch of SFT (which boosts the unlikely tokens) achieves the best results on AIME. \textbf{(right)} Shows that even 50 steps of InfoSFT after the initial SFT stage boosts the performance on AIME.}
    \label{fig:openR1}
    \vspace{-4pt}
\end{figure}

\vspace{10pt}

\subsection{Avoiding Catastrophic Forgetting}\label{sec:tradeoff}

The InfoSFT objective finds the optimal weight for the tradeoff between learning the new task against staying close to the base model (see \Cref{eq:star}). We therefore test whether InfoSFT gives a better learning-forgetting tradeoff in practice. Following \citet{shenfeld2026self}, we fine-tune Qwen-2.5-7B-Instruct on two new tasks: Science Q\&A, using 2.7K Chemistry L-3 samples from SciKnowEval \citep{feng2024sciknoweval}, and Tool Use, using 4K samples from ToolAlpaca \citep{tang2023toolalpaca}. For each task, we train SFT, DFT, and InfoSFT under multiple learning rates and for either 1 or 2 epochs. For Science Q\&A, we sweep $\mathrm{lr}\in\{1\mathrm{e}{-6},2\mathrm{e}{-6},5\mathrm{e}{-6},7\mathrm{e}{-6},1\mathrm{e}{-5}\}$, giving 10 checkpoints per method. For Tool Use, we use $\mathrm{lr}\in\{1\mathrm{e}{-6},2\mathrm{e}{-6},5\mathrm{e}{-6},7\mathrm{e}{-6}\}$, giving 8 checkpoints per method; we omit $\mathrm{1e}{-5}$ because models overfit this task more quickly. For each checkpoint, we report new-task accuracy together with prior-capability score, measured as the average accuracy on HumanEval, IFEval \citep{zhou2023instruction}, and MATH-500.
This gives a tradeoff curve showing how each method learns the new task as training becomes more aggressive.

\begin{wrapfigure}{r}{0.48\linewidth}
  \vspace{-8pt}
  \centering
  \includegraphics[width=1.0\linewidth]{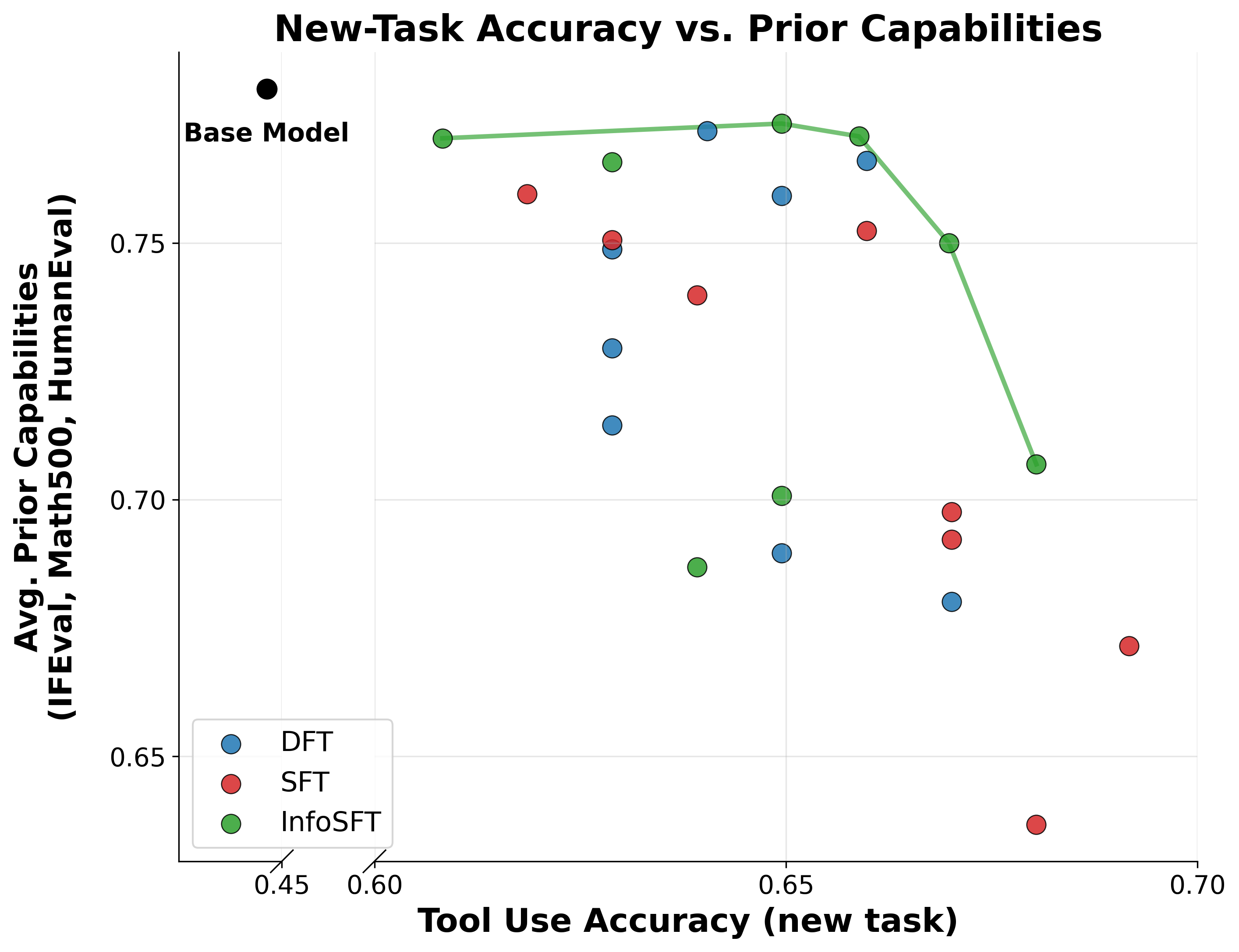}
  \vspace{-9pt}
  \caption{Tool-use performance vs. prior capabilities across 8 $(\mathrm{lr}, \#\mathrm{epochs})$ configurations. InfoSFT achieves a better learning-forgetting tradeoff compared to SFT and DFT.}
  \label{fig:tool_tradeoff}
  \vspace{-15pt}
\end{wrapfigure}
\Cref{fig:intro} (right) shows the Science Q\&A results. InfoSFT achieves the best tradeoff curve: it reaches higher new-task accuracy than SFT and DFT while preserving prior-capability. The best checkpoint of InfoSFT, also achieves the highest Science Q\&A accuracy. 
\Cref{fig:tool_tradeoff} shows the same tradeoff for Tool Use. Forgetting is stronger in this setting because samples follow a specialized format that is farther from the base model distribution. Here, SFT achieves a better performance on the new task: it learns the new task more aggressively, but with larger degradation of prior capabilities compared to InfoSFT. InfoSFT again lies on a better tradeoff curve, improving tool-use accuracy while preserving the model's previous capabilities. Note that individual checkpoints can still learn more and forget more for different methods (see \Cref{sec:all_numbers} for all the numbers), so we compare the full sweep rather than a single hyperparameter setting.

\section{Theory: Optimality of InfoSFT}\label{sec:theory}

\textbf{Warmup.}
\Cref{sec:method} describes SFT, DFT, and InfoSFT through gradients under the policy-gradient framework. To compare weighting rules here, we use the proximal update framework from RL
\citep{tomar2020mirror,schulman2015trust}, following an analysis similar to
\citet{mroueh2025reinforcement}. This framework is used only as an analysis tool: instead of tracking parameter gradients for \(\theta\), it directly characterizes the updated distribution \(\pi\) obtained after rewarding the observed demonstration while staying close to the base model \(\pi_0\). Therefore, similar to the approach of DPO \citep{rafailov2023direct}, this allows us to characterize the optimal policy $\pi$ directly in the space of distributions. Specifically, given a reward function \(R(x,y)\), the proximal update is:
\begin{equation}\label{eq:proximal}
\pi(\cdot \mid x) = \argmax_{\pi(\cdot\mid x)} \Big\{
  \E_{y \sim \pi(\cdot\mid x)}\!\big[R(x, y)\big]
  - \beta\,\KL\big(\pi(\cdot\mid x) \| \pi_{0}(\cdot\mid x)\big)
\Big\}.
\end{equation}
Here \(R\) is evaluated before the update and then held fixed while optimizing over \(\pi\). The parameter \(\beta\) controls the movement budget: larger \(\beta\) keeps \(\pi\) closer to \(\pi_0\). The closed-form optimizer is the Gibbs distribution:
\[
\pi (y\mid x) \propto \pi_{0}(y\mid x)\exp\!\big(R(x,y)/\beta\big)
\]
To connect this tool to \Cref{sec:method}, consider a sample \(y^*\) and let \(q=\pi_0(y^*| x)\). A likelihood-dependent coefficient \(\Omega(q)\) corresponds in the proximal view to a general reward function: 
\[
R_\Omega(x,y)=\Omega(q)\1\{y=y^*\}.
\]
Thus, DFT corresponds to \(\Omega(q)=1\), while SFT corresponds to \(\Omega(q)=1/q\). These are the same coefficients that appear in \Cref{eq:J_omega}. The only difference is the viewpoint: \Cref{sec:method} applies the stop-gradient weighting rule to the gradients, whereas the proximal framework applies the same coefficients and solves directly for the updated distribution. Additionally, while the likelihood q is calculated under the base model in our one-step analysis here, the gradient implementation in \Cref{sec:method} applies the same rule at the current model before the update, with likelihood-dependent weights as stop-gradient quantities.

The KL term provides a controlled way to compare weighting rules under the same amount of divergence from the base model. This is motivated by the empirical observation that larger KL divergences from the base model are closely linked to more catastrophic forgetting \citep{shenfeld2025rl,sabbaghi2026robust}. So the question we aim to answer is: \textit{``Under a controlled amount of catastrophic forgetting, what update rule learns more?''}.
We compare methods under any given update budget---which is determined by an arbitrary value of $\beta$ here, and in practice is determined by hyper-parameters such learning rate and number of epochs.

\subsection{Problem Formulation}

Fix a prompt \(x\), and let \(y^* \sim p^*(\cdot \mid x)\), where \(p^*\) is the unknown expert distribution. We state the analysis for a full response \(y^*\) to keep notation simple; the token-level rule used by InfoSFT follows by applying the same argument to each conditional next-token distribution \(\pi(\cdot\mid x,y^*_{<t})\).
Let
\[
q := \pi_{0}(y^* \mid x),
\qquad
p := p^*(y^* \mid x).
\]
We consider a general weighting function \(\Omega(q)\). Since \(q\) is computed under the base model and thus before the update, \(\Omega(q)\) is fixed during the proximal update, matching the stop-gradient treatment of the weights in \Cref{sec:method}. The corresponding proximal update is:
\begin{equation}
\label{eq:star}
\pi (\cdot\mid x)
=
\argmax_{\pi(\cdot\mid x)}
\Big\{
\E_{y\sim \pi(\cdot\mid x)}\big[\Omega(q) \,\1\{y=y^*\}\big]
-
\beta\, \KL \big(\pi(\cdot\mid x) \| \pi_{0}(\cdot\mid x)\big)
\Big\}.
\end{equation}
\vspace{-10pt}

\noindent The standard variational argument yields the closed-form solution to \Cref{eq:star}:
\begin{equation}\label{eq:gibbs}
\pi (y \mid x)
=
\frac{\pi_{0}(y \mid x)}{Z}
\exp\!\Bigl(\frac{\Omega(q)}{\beta}\,\1\{y=y^*\}\Bigr),
\qquad
Z := q\, e^{\Omega(q)/\beta} + 1 - q .
\end{equation}
\vspace{-6pt}

\noindent The partition function \(Z\) ensures normalization. Since \(y^*\) is only a single draw from the expert \(p^*(\cdot \mid x)\), in general \(p^*(y^* \mid x)<1\): there can be many correct responses for the same prompt. The right measure of progress is therefore not the likelihood of this one sample, whose maximization leads to sample overfitting as explained before, but the population KL divergence:
\[
\KL\!\bigl(p^*(\cdot \mid x)\,\|\,\pi(\cdot \mid x)\bigr)
\]

\noindent Thus, the choice of \(\Omega\) is determined by how much the update reduces this population KL.

\begin{lemma}[Oracle rule for \(\Omega\)]\label{lem:oracle}
Fix a prompt \(x\), response \(y^*\), and let \(q := \pi_{0}(y^* \mid x)\) and \(p := p^*(y^* \mid x)\). Write \(u := \Omega(q)/\beta\). Then:
\begin{enumerate}[label=(\alph*), wide, labelindent=0pt,topsep=2pt,itemsep=1pt]
\item The change in population KL after one step according to \Cref{eq:star} is
\[
\Delta\KL
=
\KL\bigl(p^*(\cdot \mid x)\| \pi (\cdot \mid x)\bigr)
-
\KL\bigl(p^*(\cdot \mid x)\| \pi_{0}(\cdot \mid x)\bigr)
=
\log Z - p\,u.
\]

\item \(\Delta\KL\) is strictly convex in~\(u\):
$
\frac{\partial^2\Delta\KL}{\partial u^2}
=
\pi (y^* | x)(1-\pi (y^* | x))>0
$.
The unique minimizer is:
\begin{equation}\label{eq:oracle-u}
u^*
= \frac{\Omega^*}{\beta} =
\mathrm{logit}(p) - \mathrm{logit}(q),
\qquad
\mathrm{logit}(t):=\log\!\frac{t}{1-t}.
\end{equation}
\end{enumerate}
\end{lemma}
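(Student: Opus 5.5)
The plan is to compute everything explicitly from the Gibbs form \eqref{eq:gibbs}. With $u=\Omega(q)/\beta$, the update is $\pi(y\mid x)=\pi_0(y\mid x)e^{u\1\{y=y^*\}}/Z$ and $Z=qe^u+1-q$. So for every $y$,
\[
\log\frac{\pi_0(y\mid x)}{\pi(y\mid x)}=\log Z-u\,\1\{y=y^*\}.
\]
For part (a), I will write the difference of the two KL divergences as a single expectation under $p^*$:
\[
\Delta\KL=\E_{y\sim p^*(\cdot\mid x)}\Bigl[\log\frac{\pi_0(y\mid x)}{\pi(y\mid x)}\Bigr].
\]
The entropy term of $p^*$ cancels here, so no support or finiteness issue arises beyond assuming $\KL(p^*\|\pi_0)<\infty$. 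Substituting the pointwise identity and taking the expectation gives $\log Z-u\,p^*(y^*\mid x)=\log Z-pu$, which is (a).

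For part (b), I will treat $\Delta\KL(u)=\log(qe^u+1-q)-pu$ as a function of the scalar $u\in\R$, holding $q,p$ fixed. The first derivative is
\[
\frac{qe^u}{qe^u+1-q}-p=\pi(y^*\mid x)-p.
\]
Call $s(u):=\pi(y^*\mid x)$; this is a logistic function of $u$, namely $s(u)=\sigma(u+\mathrm{logit}(q))$. Differentiating again gives $s(u)(1-s(u))$. This is strictly positive whenever $q\in(0,1)$, which I assume so that $\mathrm{logit}(q)$ is finite. Strict convexity follows, so a stationary point, if it exists, is the unique minimizer.

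Setting the derivative to zero gives $\sigma(u+\mathrm{logit}(q))=p$. For $p\in(0,1)$, this inverts to $u^*=\mathrm{logit}(p)-\mathrm{logit}(q)$, which is \eqref{eq:oracle-u}. Equivalently, the minimizer is characterized by $\pi(y^*\mid x)=p$: the oracle update moves the model's probability of $y^*$ exactly to the expert's probability.

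The only delicate point I expect is the boundary cases. If $p\in\{0,1\}$ or $q\in\{0,1\}$, the logits are infinite and no finite minimizer exists (the infimum is approached as $u\to\pm\infty$). I will state explicitly that $p,q\in(0,1)$, which is implicit in the lemma. Beyond that, the argument is a direct calculation.
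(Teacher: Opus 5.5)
Your proposal is correct and follows essentially the same route as the paper: substitute the Gibbs form into the KL difference to get $\log Z - pu$, then differentiate in $u$ to obtain $\pi(y^*\mid x)-p$ and $\pi(y^*\mid x)\bigl(1-\pi(y^*\mid x)\bigr)$, and solve the stationarity condition $\pi(y^*\mid x)=p$ for $u^*=\mathrm{logit}(p)-\mathrm{logit}(q)$. Writing $\Delta\KL$ as the single expectation $\E_{p^*}[\log(\pi_0/\pi)]$ and recognizing $\pi(y^*\mid x)=\sigma(u+\mathrm{logit}(q))$ are just tidier packagings of the paper's term-by-term computation, and your explicit restriction to $p,q\in(0,1)$ makes precise an assumption the paper leaves implicit.
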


The oracle update sets the new probability of the observed response to its expert probability. Since \(p\) is unknown, this rule cannot be implemented directly. We therefore replace the unknown term \(\mathrm{logit}(p)\) with a constant, giving the family: $u_C(q)=C-\mathrm{logit}(q)$.

\subsection{Near-optimality for the Expected KL}

Since \(\Omega\), and equivalently \(u=\Omega/\beta\), is only a function of \(q\), the expected test KL decomposes by conditioning on \(q\) (note that $q = \pi_{0}(y^*|x)$ is a random variable across prompts):
\begin{equation}\label{eq:expected-kl}
\E_{x, y^*}[\Delta\KL]
=
\E_q\!\bigl[\log(q e^{u(q)} + 1 - q) - \bar{p}(q)\, u(q)\bigr],
\qquad
\bar{p}(q) := \E[p \mid \pi_{0}(y^* \mid x) = q].
\end{equation}
The function \(\bar p(q)\) is still unknown because it depends on the expert distribution. The family \(u_C(q)=C-\mathrm{logit}(q)\) approximates this unknown calibration by a constant shift. The next lemma shows that this family has an optimal shift \(C^*\), and that the resulting rule is near-oracle and strictly better than DFT and SFT in expected KL reduction.

\begin{lemma}[InfoSFT dominates DFT and is near-oracle]\label{lem:expected}
Let \(u_C(q)=C-\mathrm{logit}(q)\), assume \(q \le d\), write \(\bar p:=\E_{x, y^*}[p] = \E_q[\bar p(q)]\), and assume that \(d \leq \bar{p}/e^2\). Then:
\begin{enumerate}[label=(\alph*),wide,labelindent=0pt,topsep=2pt,itemsep=1pt]
\item \(\E[\Delta\KL(u_C)]\) is strictly convex in \(C\), and its unique minimizer \(C^*\) satisfies
$
C^* = \mathrm{logit}(\bar p) + O(d)
$
when \(d \ll 1\).

\item
Define
$
u_{\mathrm{info}}(q):=\mathrm{logit}(\bar p)- \mathrm{logit}(q).
$
Then 
$
\E[\Delta\KL_{\mathrm{info}}]-\E[\Delta\KL^*]
=
H_b(\bar{p})-\E[H_b(p)] \ge 0,
$
where \(H_b(t)\) is the binary entropy. Equality holds when \(p\) is constant. Moreover, when \(d \ll \bar p\):
\[
\frac{\E[\Delta \KL(u_{\mathrm{info}})]}{\E[\Delta \KL(u^*)]}
\geq
1 - \mathcal O\!\left( \frac{ |\max\{\log(1 / \bar p), 1\}|}{\log (1/d)} \right).
\]

\item InfoSFT strictly improves over DFT in KL reduction under the same budget. Furthermore, if $\bar p \leq 0.98$, then InfoSFT strictly improves over SFT.
\end{enumerate}
\end{lemma}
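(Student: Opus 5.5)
The plan is to exploit the special algebraic form of the family \(u_C(q)=C-\mathrm{logit}(q)\). Since \(q e^{u_C(q)} = e^{C}(1-q)\), the partition function in \Cref{eq:gibbs} factorizes as \(Z=(1-q)(1+e^{C})\). By Lemma~\ref{lem:oracle}(a) and \Cref{eq:expected-kl}, this gives
\[
\E[\Delta\KL(u_C)] = \E_q[\log(1-q)] + \log(1+e^{C}) - \bar p\, C + \E_q\bigl[\bar p(q)\,\mathrm{logit}(q)\bigr].
\]
For part (a), I will read off convexity directly: the only \(C\)-dependence is the softplus \(\log(1+e^C)\), which is strictly convex, minus a linear term. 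The first-order condition \(\sigma(C)=\bar p\) gives \(C^*=\mathrm{logit}(\bar p)\). This is in fact exact in this formulation, which is consistent with (and stronger than) the stated \(O(d)\) error. If the intended setup has additional \(d\)-dependence (for instance from the constraint \(q\le d\) entering through clipping), I would Taylor-expand the first-order condition in \(q\) to pick up the \(O(d)\) correction.

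For part (b), I will evaluate both rules in closed form. For the oracle, \(u^*=\mathrm{logit}(p)-\mathrm{logit}(q)\) gives \(Z=(1-q)/(1-p)\), so
\[
\Delta\KL^* = \log(1-q) - \log(1-p) - p\,\mathrm{logit}(p) + p\,\mathrm{logit}(q).
\]
For InfoSFT, the analogous expression holds with \(\bar p\) in place of \(p\) in the middle two terms. Subtracting and using the identity \(-\log(1-t)-t\,\mathrm{logit}(t)=H_b(t)\) yields \(H_b(\bar p)-\E[H_b(p)]\). This is nonnegative by Jensen, because \(H_b\) is concave, with equality iff \(p\) is a.s. constant. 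The ratio bound then follows from writing
\[
\frac{\E\Delta\KL_{\mathrm{info}}}{\E\Delta\KL^*} = 1-\frac{H_b(\bar p)-\E H_b(p)}{|\E\Delta\KL^*|},
\]
noting that both quantities are negative. The numerator is at most \(H_b(\bar p)\le \bar p\log(1/\bar p)+\bar p\). For the denominator, I will use \(q\le d\), so \(\mathrm{logit}(q)\le \log(d/(1-d))\), which gives \(-\E[p\,\mathrm{logit}(q)]\gtrsim \bar p\log(1/d)\). The remaining terms are \(O(\bar p)+\E H_b(p)\) and are absorbed once \(d\le \bar p/e^2\), which makes \(\log(1/d)\) dominate \(\log(1/\bar p)+2\). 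This produces the claimed \(1-O(\max\{\log(1/\bar p),1\}/\log(1/d))\).

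For part (c), I will compare at a matched budget. For DFT, \(u\equiv c\) is constant. I will take the most favorable \(c\) for DFT, i.e. minimize \(\E_q[\log(qe^c+1-q)]-\bar p c\) over \(c\), and show that this minimum is strictly above \(\E\Delta\KL(u_{\mathrm{info}})\). The route is via the pointwise quantity \(G(u,q)=\log(qe^u+1-q)-\bar p(q)u\). For fixed \(q\), \(G\) is minimized at \(u=\mathrm{logit}(\bar p(q))-\mathrm{logit}(q)\), which depends on \(q\). A constant \(u\) cannot match this \(q\)-dependence unless \(q\) is degenerate, whereas \(u_{\mathrm{info}}\) reproduces the \(-\mathrm{logit}(q)\) dependence exactly. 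A clean way to quantify the gap is that \(G\) at the DFT choice behaves like \(-\bar p c + O(qe^c)\), so its achievable reduction is only of order \(\bar p\log(1/(e\,\E q))\). This should be strictly smaller than the \(\bar p\log(1/d)\)-type reduction of InfoSFT. For SFT, \(u=c/q\), and I will redo the same one-parameter optimization over \(c\). Here the \(qe^{c/q}\) term explodes for small \(q\), which caps the useful \(c\); the threshold \(\bar p\le 0.98\) should emerge as the condition under which the optimized SFT value exceeds the InfoSFT value, via a numerical check on the residual inequality.

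I expect part (c) to be the main obstacle, in particular pinning down what ``same budget'' means and obtaining the explicit constant \(0.98\) for SFT. I would first try the approach of optimizing each baseline over its single scale parameter and bounding the result, as described above.
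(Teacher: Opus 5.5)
\textbf{Parts (a) and (b).} These are fine and follow the paper's computations. In (a) you are actually more precise than the paper. Since $q e^{u_C(q)}=e^{C}(1-q)$, one has $Z=(1-q)(1+e^C)$, so $C^*=\mathrm{logit}(\bar p)$ holds exactly. The paper's proof instead writes $\log\bigl(\tfrac{e^C}{1-q}+1-q\bigr)$, which is an algebra slip, and its $O(d)$ correction is an artifact of that slip. Your fallback Taylor expansion is therefore unnecessary. Your (b) matches the paper: closed forms for both rules, Jensen on $H_b$, and a $\bar p\log(1/d)$-type lower bound on $|\E[\Delta\KL^*]|$.

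\textbf{The gap in (c).} The statement you set out to prove there is false. You compare InfoSFT against DFT at its \emph{best} constant $u\equiv c$, and against SFT at its best $u=c/q$. Nothing in the hypotheses prevents such a rule from matching or beating $u_{\mathrm{info}}$.

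\emph{Degenerate $q$.} If $q$ is a point mass $q_0$, then $u_{\mathrm{info}}$ is itself a constant, and both families $\{c\}$ and $\{c/q\}$ contain it. You get equality, not strict improvement.

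\emph{Correlated $p$ and $q$.} Suppose $\mathrm{logit}(\bar p(q))-\mathrm{logit}(q)\equiv k$. Then $u\equiv k$ minimizes your $G(\cdot,q)$ for every $q$, so it strictly beats InfoSFT. Concretely, take $q$ uniform on $\{0.005,0.01\}$ and $p=\sigma(\mathrm{logit}(q)+5)$. Then $\bar p\approx 0.51$ and $d=0.01\le \bar p/e^2$. Yet DFT with $c=5$ gives $\E[\Delta\KL]\approx -1.839$, versus $\approx -1.824$ for InfoSFT.

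\emph{The heuristic cannot decide this.} By $\log(1+x)\le x$, tuned DFT already achieves a reduction of at least $\bar p\log\frac{\bar p}{e\,\E[q]}$. Since $\E[q]\le d$, this is the same order as InfoSFT's reduction. Also, nothing in your plan would make the constant $0.98$ appear in a tuned-SFT comparison.

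\textbf{What the paper does instead.} The paper reads ``same budget'' as a fixed normalization: $u_{\mathrm{DFT}}\equiv 1$ and $u_{\mathrm{SFT}}=1/q$. A global rescaling of $\Omega$ is treated as the budget knob absorbed into $\beta$ or the learning rate. Under this reading, crude bounds suffice.

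\emph{DFT.} We have $\E[\Delta\KL_{\mathrm{DFT}}]=\E[\log(1+(e-1)q)]-\bar p\ge -\bar p$. On the other side, $\E[\Delta\KL_{\mathrm{info}}]\le \bar p\log d+H_b(\bar p)<\bar p(\log d+1-\log\bar p)\le -\bar p$. The last inequality is exactly where $d\le\bar p/e^2$ is used.

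\emph{SFT.} Set $\psi(t)=\log(1+t(e^{1/t}-1))$. It satisfies $\psi(t)\ge 1/t+\log t$ and is decreasing on $(0,1)$. This gives $\E[\Delta\KL_{\mathrm{SFT}}]\ge \E[p\log q]+(1-\bar p)\psi(\bar p/e^2)$. The claim then reduces to the scalar inequality $(1-\bar p)\psi(\bar p/e^2)\ge H_b(\bar p)$, which is verified numerically for $\bar p\le 0.988$.

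You need to adopt this fixed-normalization comparison. The tuned-scale version cannot be proved because it is not true.
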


\begin{remark}
The condition \(q \leq \bar{p}/e^2\) captures the supervised fine-tuning regime in which the model assigns smaller likelihood to expert responses than the expert distribution does on average.
\end{remark}

\begin{remark}\label{remark:beta}
The parameter \(\beta\) only appears through \(u=\Omega/\beta\). Once the shape of \(u(q)\) is fixed, changing \(\beta\) rescales the objective by a global constant, which can be absorbed into the learning rate in the gradient form of the update in \Cref{eq:J_omega}. Thus, the analysis identifies the relative weighting rule across likelihoods for any fixed amount of movement from the base model. Stopping earlier or later changes this movement budget, but not the derived shape of the weighting rule.
\end{remark}

Finally, returning to the supervised gradient in \Cref{eq:J_omega}, the weighting rule on an observed token is \(q\Omega(q)\). Absorbing $\beta$ into the learning rate and clipping negative weights gives the InfoSFT token weight:
\vspace{-10pt}
\[
w_{\mathrm{InfoSFT}}(q)
\propto
q\Big[\mathrm{logit}(\bar p)-\mathrm{logit}(q)\Big]_+,
\qquad
q=\pi_\theta(y_t^*\mid x,y^*_{<t})
\]
\vspace{-10pt}
\section{Conclusion}
We revisit supervised fine-tuning as a crucial stage for teaching LLMs new behaviors from offline expert data. We show that both uniform fitting and likelihood-proportional weighting miss an important tradeoff between learning low-likelihood expert tokens and preserving prior capabilities. We derive InfoSFT, a simple token-weighting rule that emphasizes informative middle-confidence tokens. Across math, code, reasoning, InfoSFT improves generalization and achieves a better learning-forgetting tradeoff. These results highlight token weighting as a key ingredient for reliable supervised post-training.

\section*{Acknowledgment} 
This research has been supported by Coefficient Giving and the UK AI Security Institute. AJ was supported in part by  the Sloan fellowship in mathematics, the NSF Award DMS-2311024, an Amazon
Faculty Research Award, an Adobe Faculty Research Award and an iORB grant form USC Marshall School of Business.

\newpage

\bibliography{bibliography}
\bibliographystyle{unsrtnat}

\newpage
\appendix

\section{Additional Experiments}\label[appendixsection]{sec:additional}
\subsection{Average Token Probability}\label[appendixsection]{sec:tok_prob}
For a subset of $N = 100$ questions from each Numina-Math-CoT dataset and OpenCodeInstruct, we generated the responses with several models and read the probabilities of the tokens. We then took the average over tokens: 

$$\bar p _{\text{student}} = \frac{1}{NL} \sum_{x_{(i)}, y \sim \pi_\theta(\cdot|x_{(i)}) } \pi_\theta(y_t|[x_{(i)}, y_{<t}])$$

\noindent This value is reported in \Cref{table:pbar}. As it can be seen, $\bar p _{\text{student}}$ is consistently in $[0.9, 0.95]$. This is completely in agreement with our results in \Cref{sec:pbar} where we consistently achieve the beast performance (acc@1 and pass@8) for the choice of  $\bar p = 0.93$ as the hyper-parameter in \Cref{eq:InfoSFT_grad}.  

\begin{table}[h!]
\centering
\caption{The average token probability of the student model on the its correct responses is consistently in the interval of $[0.9, 0.95]$. This lets us fix $\bar p  = 0.93$ in our experiments.}
\resizebox{\linewidth}{!}{%
\begin{tabular}{@{}l cccccc@{}}
\toprule
& \makecell{Qwen-Math-1.5B\\(Numina-Math)} & \makecell{Qwen-Math-1.5B\\(OpenCode)} & \makecell{Qwen-Math-7B\\(Numina-Math)} & \makecell{Qwen-2.5-Instruct\\(Numina-Math)} & \makecell{Qwen-2.5-Instruct\\OpenCode)} & \makecell{Llama-3.1-8B \\(Opencode)} \\
\midrule
\makecell{Token\\Probability}
&  \makecell{93.6\\($\pm$ 0.1)} & \makecell{92.9\\($\pm$ 0.1)} & \makecell{92.7\\($\pm$ 0.1)} &  \makecell{93.0\\($\pm$ 0.1)} &  \makecell{91.8\\($\pm$ 0.1)} &  \makecell{90.5\\($\pm$ 0.1)}   \\
\bottomrule
\end{tabular}%
}
\label{table:pbar}
\end{table}

\begin{figure*}[h!]
    \centering
    \includegraphics[width=0.6\linewidth]{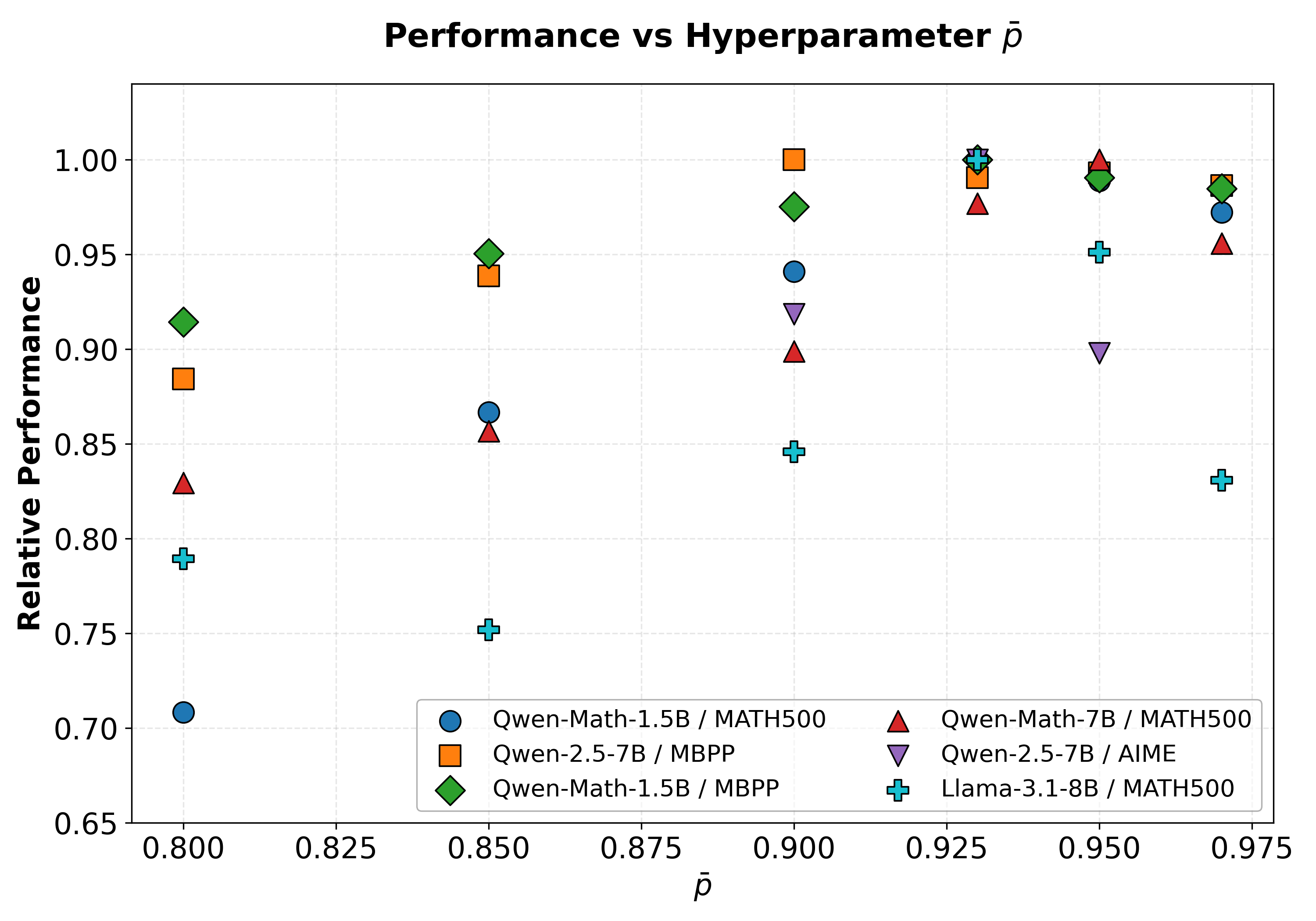}
    \vspace{-6pt}
    \caption{Relative performance of different models and datasets when sweeping hyper-parameter $\bar{p}$. We plot pass@1 for values of $\bar{p}$ and observe that $\bar{p} = 0.93 $ is a good choice across various settings.}
    \label{fig:pbar_realtive}
    \vspace{-2pt}
\end{figure*}

\subsection{Token accuracy comparisons}\label[appendixsection]{sec:tok_accuracy}
We report the training token accuracy for Qwen2.5-Math-1.5B and Qwen2.5-Math-7B during training on Numina-Math across all the methods. As shown in \Cref{fig:token_acc}, SFT has the highest accuracy since it assigns uniform weight to all the token samples. However, we show in \Cref{sec:performance} that this does transfer to better generalization at test-time, and SFT is outperformed by InfoSFT.

\begin{figure*}
    \centering
    \begin{subfigure}{0.4\linewidth}
        \includegraphics[width=\linewidth]{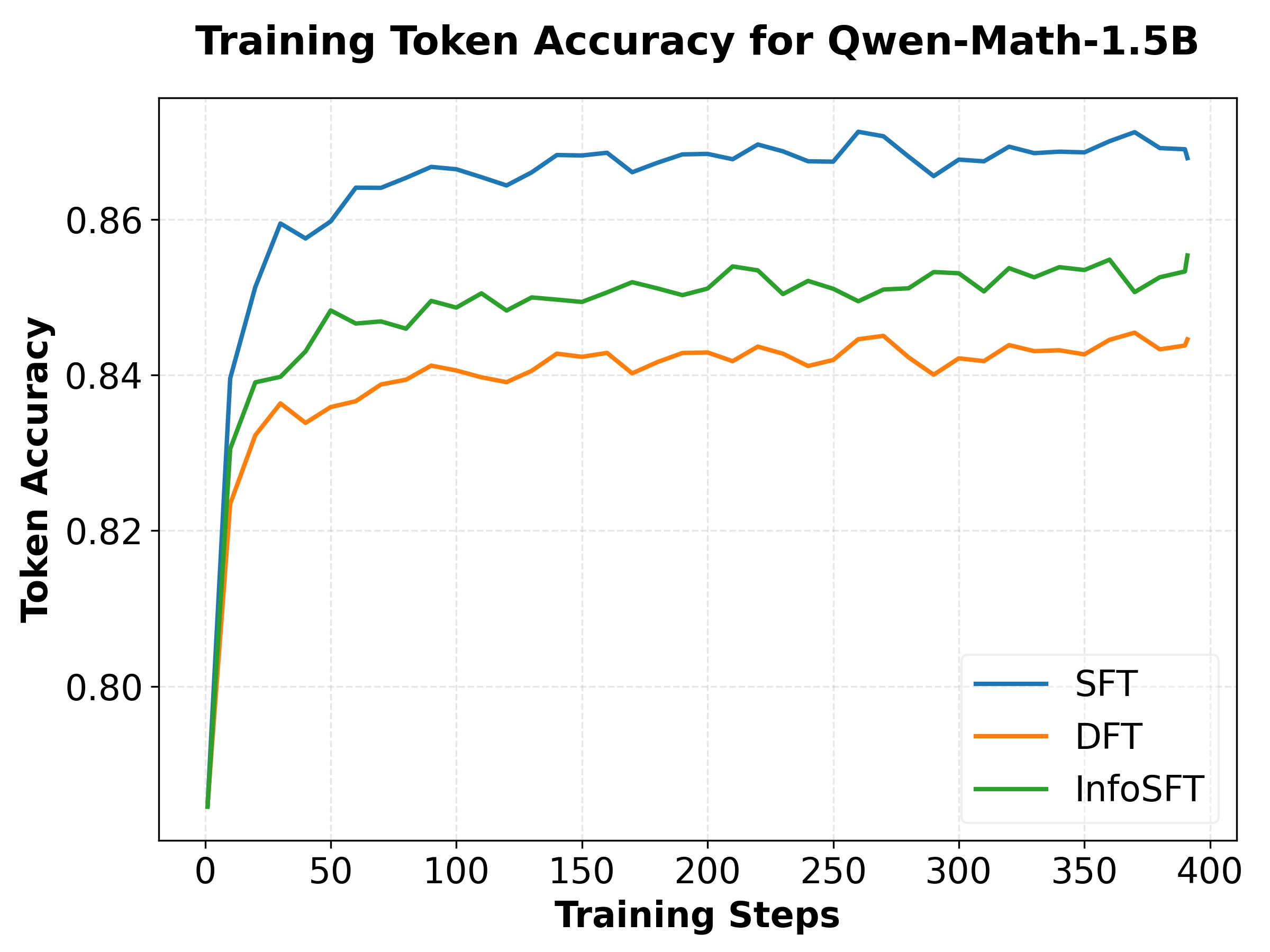}
    \end{subfigure}%
    \hspace{0.2in}
    \begin{subfigure}{0.4\linewidth}
        \includegraphics[width=\linewidth]{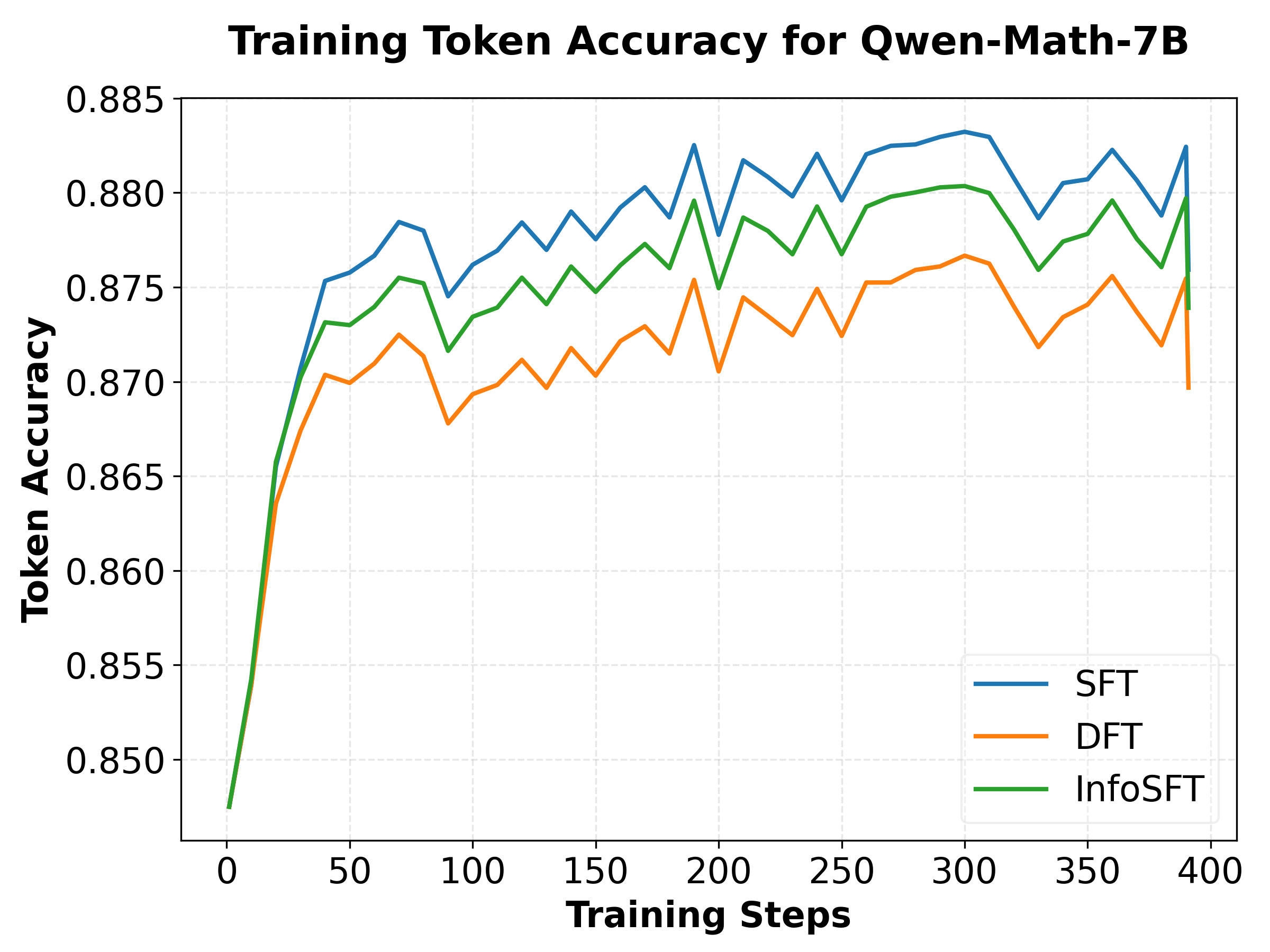}
    \end{subfigure}
     \caption{Token accuracy comparisons. SFT has higher training token accuracy, but it is outperformed by InfoSFT at test-time benchmarks.}
     \label{fig:token_acc}
\end{figure*}

\subsection{Detailed results of catastrophic forgetting experiments}\label[appendixsection]{sec:all_numbers}
We report the results of all the fine-tuning settings that were done in \Cref{sec:tradeoff} in \Cref{tab:science_results} and \Cref{tab:tool_results}.

\begin{table}[t]
\centering
\caption{Science Q\&A \citep{shenfeld2026self} fine-tuning results with SFT, DFT, and InfoSFT. All numbers are percentages. The numbers are used to make \Cref{fig:intro}, right. InfoSFT with $\mathrm{lr = 1e-5}$ and 2 epochs of training achieves the best result on the new task while forgetting less compared to DFT and SFT with the same hyper-parameters.}
\label{tab:science_results}
\resizebox{\linewidth}{!}{%
\begin{tabular}{lcccccc}
\toprule
Method & Setting & Science & IFEval & MATH500 & HumanEval & Prior Avg. \\
\midrule
SFT & $1\mathrm{e}{-6}$, 1 epoch & 51.5 & 73.9 & 72.2 & 82.9 & 76.4 \\
SFT & $1\mathrm{e}{-6}$, 2 epochs & 55.2 & 74.7 & 72.6 & 84.8 & 77.4 \\
SFT & $2\mathrm{e}{-6}$, 1 epoch & 55.0 & 71.6 & 71.0 & 82.9 & 75.2 \\
SFT & $2\mathrm{e}{-6}$, 2 epochs & 63.5 & 70.9 & 71.2 & 82.3 & 74.8 \\
SFT & $5\mathrm{e}{-6}$, 1 epoch & 63.5 & 66.7 & 67.6 & 82.3 & 72.2 \\
SFT & $5\mathrm{e}{-6}$, 2 epochs & 65.9 & 67.7 & 73.4 & 82.5 & 74.5 \\
SFT & $7\mathrm{e}{-6}$, 1 epoch & 64.5 & 65.8 & 69.0 & 81.1 & 72.0 \\
SFT & $7\mathrm{e}{-6}$, 2 epochs & 68.2 & 64.3 & 71.0 & 82.9 & 72.7 \\
SFT & $1\mathrm{e}{-5}$, 1 epoch & 67.1 & 62.1 & 68.2 & 81.1 & 70.5 \\
SFT & $1\mathrm{e}{-5}$, 2 epochs & 69.6 & 63.6 & 71.0 & 81.7 & 72.1 \\
\midrule
DFT & $1\mathrm{e}{-6}$, 1 epoch & 44.8 & 73.8 & 75.2 & 84.1 & 77.7 \\
DFT & $1\mathrm{e}{-6}$, 2 epochs & 53.5 & 72.8 & 76.2 & 82.9 & 77.3 \\
DFT & $2\mathrm{e}{-6}$, 1 epoch & 56.0 & 72.5 & 75.8 & 84.8 & 77.7 \\
DFT & $2\mathrm{e}{-6}$, 2 epochs & 53.7 & 71.2 & 75.4 & 82.9 & 76.5 \\
DFT & $5\mathrm{e}{-6}$, 1 epoch & 60.2 & 69.1 & 74.2 & 82.9 & 75.4 \\
DFT & $5\mathrm{e}{-6}$, 2 epochs & 62.3 & 69.1 & 74.6 & 81.1 & 74.9 \\
DFT & $7\mathrm{e}{-6}$, 1 epoch & 63.1 & 65.6 & 73.4 & 80.5 & 73.2 \\
DFT & $7\mathrm{e}{-6}$, 2 epochs & 62.7 & 69.1 & 73.8 & 82.9 & 75.3 \\
DFT & $1\mathrm{e}{-5}$, 1 epoch & 59.8 & 59.3 & 69.2 & 80.5 & 69.7 \\
DFT & $1\mathrm{e}{-5}$, 2 epochs & 66.5 & 62.3 & 71.0 & 81.7 & 71.7 \\
\midrule
InfoSFT & $1\mathrm{e}{-6}$, 1 epoch & 46.8 & 74.1 & 75.0 & 84.5 & 77.9 \\
InfoSFT & $1\mathrm{e}{-6}$, 2 epochs & 55.8 & 73.4 & 74.6 & 84.1 & 77.4 \\
InfoSFT & $2\mathrm{e}{-6}$, 1 epoch & 54.8 & 72.3 & 76.2 & 86.0 & 78.2 \\
InfoSFT & $2\mathrm{e}{-6}$, 2 epochs & 61.5 & 72.1 & 73.6 & 84.1 & 76.6 \\
InfoSFT & $5\mathrm{e}{-6}$, 1 epoch & 66.9 & 67.7 & 71.4 & 83.5 & 74.2 \\
InfoSFT & $5\mathrm{e}{-6}$, 2 epochs & 64.9 & 70.6 & 74.6 & 82.9 & 76.0 \\
InfoSFT & $7\mathrm{e}{-6}$, 1 epoch & 66.1 & 65.8 & 73.0 & 82.3 & 73.7 \\
InfoSFT & $7\mathrm{e}{-6}$, 2 epochs & 69.0 & 68.4 & 73.4 & 83.5 & 75.1 \\
InfoSFT & $1\mathrm{e}{-5}$, 1 epoch & 66.9 & 59.2 & 68.6 & 80.5 & 69.4 \\
InfoSFT & $1\mathrm{e}{-5}$, 2 epochs & 70.0 & 64.0 & 72.6 & 81.7 & 72.8 \\
\bottomrule
\end{tabular}%
}
\end{table}

\begin{table}[t]
\centering
\caption{Tool Use \citep{shenfeld2026self} fine-tuning results with SFT, DFT, and InfoSFT. All numbers are percentages. Even though SFT achieves the best performance on the new task ($\mathrm{lr = 7e-6}$), it also forgets more compared to InfoSFT with $\mathrm{lr = 5e-6, 2epochs}$.}
\label{tab:tool_results}
\resizebox{\linewidth}{!}{%
\begin{tabular}{lcccccc}
\toprule
Method & Setting & Tool Use & IFEval & MATH500 & HumanEval & Prior Avg. \\
\midrule
SFT & $1\mathrm{e}{-6}$, 1 epoch & 61.9 & 73.8 & 71.8 & 82.3 & 76.0 \\
SFT & $1\mathrm{e}{-6}$, 2 epochs & 62.9 & 69.9 & 74.2 & 81.1 & 75.1 \\
SFT & $2\mathrm{e}{-6}$, 1 epoch & 63.9 & 67.7 & 73.2 & 81.1 & 74.0 \\
SFT & $2\mathrm{e}{-6}$, 2 epochs & 66.0 & 68.2 & 75.2 & 82.3 & 75.2 \\
SFT & $5\mathrm{e}{-6}$, 1 epoch & 67.0 & 62.7 & 68.2 & 76.8 & 69.2 \\
SFT & $5\mathrm{e}{-6}$, 2 epochs & 67.0 & 62.5 & 68.8 & 78.0 & 69.8 \\
SFT & $7\mathrm{e}{-6}$, 1 epoch & 68.0 & 58.4 & 55.2 & 77.4 & 63.7 \\
SFT & $7\mathrm{e}{-6}$, 2 epochs & 69.2 & 60.4 & 64.2 & 76.8 & 67.2 \\
\midrule
DFT & $1\mathrm{e}{-6}$, 1 epoch & 64.0 & 72.6 & 75.4 & 83.5 & 77.2 \\
DFT & $1\mathrm{e}{-6}$, 2 epochs & 66.0 & 71.9 & 75.0 & 82.9 & 76.6 \\
DFT & $2\mathrm{e}{-6}$, 1 epoch & 65.0 & 70.2 & 75.2 & 82.3 & 75.9 \\
DFT & $2\mathrm{e}{-6}$, 2 epochs & 62.9 & 69.1 & 73.8 & 81.7 & 74.9 \\
DFT & $5\mathrm{e}{-6}$, 1 epoch & 62.9 & 67.7 & 73.8 & 77.4 & 73.0 \\
DFT & $5\mathrm{e}{-6}$, 2 epochs & 62.9 & 64.1 & 75.2 & 75.0 & 71.5 \\
DFT & $7\mathrm{e}{-6}$, 1 epoch & 65.0 & 60.3 & 73.4 & 73.2 & 69.0 \\
DFT & $7\mathrm{e}{-6}$, 2 epochs & 67.0 & 60.6 & 70.2 & 73.2 & 68.0 \\
\midrule
InfoSFT & $1\mathrm{e}{-6}$, 1 epoch & 65.0 & 72.3 & 77.4 & 82.3 & 77.3 \\
InfoSFT & $1\mathrm{e}{-6}$, 2 epochs & 65.9 & 74.1 & 74.8 & 82.3 & 77.1 \\
InfoSFT & $2\mathrm{e}{-6}$, 1 epoch & 62.9 & 74.6 & 73.4 & 81.7 & 76.6 \\
InfoSFT & $2\mathrm{e}{-6}$, 2 epochs & 67.0 & 69.7 & 74.8 & 80.5 & 75.0 \\
InfoSFT & $5\mathrm{e}{-6}$, 1 epoch & 60.8 & 74.5 & 74.8 & 81.8 & 77.0 \\
InfoSFT & $5\mathrm{e}{-6}$, 2 epochs & 68.0 & 66.4 & 67.0 & 78.7 & 70.7 \\
InfoSFT & $7\mathrm{e}{-6}$, 1 epoch & 63.9 & 62.7 & 67.2 & 76.2 & 68.7 \\
InfoSFT & $7\mathrm{e}{-6}$, 2 epochs & 65.0 & 63.2 & 70.2 & 76.8 & 70.1 \\
\bottomrule
\end{tabular}%
}
\end{table}

\section{InfoSFT Controls the Training Entropy}\label[appendixsection]{sec:entropy}
In this section, we study the behavior of the loss function defined by InfoSFT. For simplicity, we only consider that case that $q \le \bar p $, and neglect the clipping term in \Cref{eq:InfoSFT_grad}. This will result in the following: 
\begin{align*}
\nabla_\theta J_{\text{InfoSFT}}
& \approx 
\E_{(x,y^*)\sim \calD}\bigg[\frac{1}{L}\sum_{t=1}^{L} \pi_\theta(y^*_t|x) \big( \mathrm{logit}(\bar p) - \mathrm{logit}(\pi_\theta(y^*_t|x)) \big) \nabla_\theta \log \pi_\theta(y^*_t \mid x)\bigg] \\
& = \E_{(x,y^*)\sim \calD}\bigg[\frac{1}{L}\sum_{t=1}^{L} \big( \mathrm{logit}(\bar p) - \mathrm{logit}(\pi_\theta(y^*_t|x)) \big) \nabla_\theta  \pi_\theta(y^*_t \mid x)\bigg]
\end{align*}

\noindent Using that $\int \log(x) dx  = x (\log(x) -1) $, the corresponding objective is:
\begin{align}
    J_{\mathrm{InfoSFT}}(\theta)
& =
\E_{(x,y^*)\sim \calD}\bigg[\frac{1}{L}\sum_{t=1}^{L} \big( \mathrm{logit}(\bar p) \pi_\theta(y^*_t \mid x) - \pi_\theta(y^*_t \mid x) \log(\pi_\theta(y^*_t|x)) \nonumber\\
& \hspace{2.355in} - (1-\pi_\theta(y^*_t \mid x))\log(1- \pi_\theta(y^*_t|x)) \big) \bigg] \nonumber\\
& =  \mathrm{logit}(\bar p) \cdot J_{\mathrm{DFT}} - \underbrace
{\E \bigg[ - \pi_\theta(y^*_t \mid x) \log(\pi_\theta(y^*_t|x)) - (1-\pi_\theta(y^*_t \mid x))\log(1- \pi_\theta(y^*_t|x)) \big) \bigg]}_{H_b(y^*|x)}
\end{align}

\noindent Thus InfoSFT is approximately equivalent to adding the binary entropy $H_b(y^*|x)$ term to DFT with an appropriate coefficient. We note that this term is different from the empirical entropy that is: $$\E\big[\sum_y - \pi_\theta(y|[x, y^*_{<t}]) \log(\pi_\theta(y|[x, y^*_{<t}])\big]$$ InfoSFT only accounts for the tokens in the expert response, and adds a tight upper-bound (when the rest of the tokens are uniform) for the rest of the tokens. Nevertheless, since supervised training only focuses on the expert tokens, the impact is roughly the same as shown in \Cref{fig:pass@k_entropy} (right).


\section{Proofs}
\begin{proof}[\textbf{Proof of Lemma \ref{lem:oracle}}] \textbf{(a)} Write $\pi (y) = \pi_{0}(y)/Z$ for $y \neq y^*$ and $\pi (y^*) = \pi_{0}(y^*)e^u/Z$ with $u = \Omega/\beta$. Then:
\begin{align*} \KL(p^* \| \pi ) &= \sum_y p^*(y)\log\frac{p^*(y)}{\pi (y)} = \sum_{y\neq y^*} p^*(y)\log\frac{p^*(y)\,Z}{\pi_{0}(y)} + p^*(y^*)\log\frac{p^*(y^*)\,Z}{\pi_{0}(y^*)\,e^u} \\ 
&= 
 \sum_{y\neq y^*} p^*(y)\big[\log\frac{p^*(y)}{\pi_{0}(y)} + \log Z \big] + p^*(y^*)\log\big[\frac{p^*(y^*)}{\pi_{0}(y^*)} - u + \log Z \big] \\
&= \KL(p^* \| \pi_{0}) + \log Z - p\,u
\end{align*}

\noindent In which $p = p^*(y^*)$ according to our notation. Subtracting $\KL(p^* \| \pi_{0})$ from both sides obtains the final expression.  

\vspace{4pt}
\noindent \textbf{(b)} 
Remember that $Z = q e^u +1 -q$ and $q = \pi_{0}(y^*)$. Then,
differentiating with respect to~$u$ yields:
\[ \frac{\partial\,\Delta\KL}{\partial u} = \frac{q\,e^u}{q e^u +1 -q} - p = q\frac{e^u}{Z} - p =  \pi (y^*\!\mid\! x) - p \]
Setting to zero yields $\pi (y^* \mid x) = p^*$, which upon solving for~$u$ gives

$$\frac{q\,e^{u^*}}{q\,e^{u^*}+1-q} = p \quad\Longrightarrow\quad e^{u^*} = \frac{p(1-q)}
{q(1-p)} \quad\Longrightarrow\quad u^* = \mathrm{logit}(p)-\mathrm{logit}(q).$$

\noindent Taking the second derivative:

$$\frac{\partial^2\Delta\KL}{\partial u^2} = \frac{qe^u(qe^u+1-q)-qe^u(qe^u)}{(qe^u+1-q)^2} = \frac{qe^u}{(qe^u+1-q)}.\frac{qe^u}{(1-q)}$$

\noindent where the first fraction in RHS is $\pi (y^*) = \pi_{0}(y^*)e^u/Z$. Thus: $\partial^2\Delta\KL/\partial u^2 = \pi (y^*)(1-\pi (y^*))>0$ confirms strict convexity.


\end{proof}

\begin{proof}[\textbf{Proof of Lemma \ref{lem:expected}}]
\textbf{(a)} Starting from \Cref{eq:expected-kl}, we plug-in $u_C = C - \mathrm{logit}(q)$: 

$$\Delta KL(u_C) = \E_q[\log(\frac{e^C}{1-q} + 1 - q)] - \bar p(q) C + \bar p (q) \mathrm{logit}(q)] $$

\noindent Taking the gradient with respect to $C$ gives: 

\begin{equation}\label{eq:grad_C}
    \frac{\partial \Delta KL(u_C)}{\partial C} = \E_q[\frac{e^C}{e^C + (1-q)^2}] - \bar p(q)]
\end{equation}

\noindent Setting the derivative to zero gives:

\begin{equation*}
\E_q\!\Big[\frac{e^{C^*}}{e^{C^*}+(1-q)^2}\Big] = \bar p
\end{equation*}
If \(q \le d \ll 1\), then:

$$\E_q\!\Big[\frac{e^{C^*}}{e^{C^*}+1 }(1 + \frac{2q}{e^{C^*}+1})\Big] = \frac{e^{C^*}}{e^{C^*}+1 }(1 + \frac{2\E[q]}{e^{C^*}+1}) = \bar p $$ And therefore: 

$$C^* = \mathrm{logit}(\bar p) +\calO(d)$$

Additionally, taking the second derivative from \Cref{eq:grad_C}:

$$\frac{\partial^2 \Delta KL(u_C)}{\partial^2 C} = \E_q[\frac{e^C(e^C + (1-q)^2) - e^C.e^C}{(e^C + (1-q)^2)^2}] = \E_q[\frac{e^C (1-q)^2}{(e^C + (1-q)^2)^2}] > 0 $$

\noindent Thus, $\Delta KL(u_C)$ is strictly convex, and $C^*$ is its unique minimizer.

\vspace{4pt}
\noindent \textbf{(b)} From the definition of $\Delta KL(u) = \log(qe^u + 1 - q) - pu$, we substitute the Oracle weight $u^* = \log \frac{p}{1-p} - \log \frac{q}{1-q}$:
\begin{align}\label{eq:KL_oracle}
\Delta KL(u^*) &= \log\left(q \frac{p}{1-p} \frac{1-q}{q} + 1 - q\right) - p \left( \log \frac{p}{1-p} - \log \frac{q}{1-q} \right) \nonumber \\
&= \log\left(\frac{1-q}{1-p}\right) - p \left( \log \frac{p}{1-p} - \log q + \log(1-q) \right) \nonumber \\
&= p \log q + (1-p) \log(1-q) - (1-p) \log(1-p) - p \log p
\end{align}

\noindent We can reuse the expansion for KL in~\eqref{eq:KL_oracle} for the calibrated InfoSFT rule with $\bar{p}$ replacing $p$ inside $u^*$:

\begin{align}\label{eq:expected_InfoSFT}
    \E[\Delta\KL_{\mathrm{info}}] = & 
    \E \Big[ \log\left(\frac{1-q}{1- \bar p}\right) - p \left( \log \frac{\bar p}{1- \bar p} - \log \frac{q}{1-q} \right) \Big] \nonumber \\
= & \E [p \log q] + \E[(1-p) \log(1-q)] - (1- \bar p) \log(1- \bar p) - \bar p \log \bar p \nonumber \\
=& \E [p \log q] + \E[(1-p) \log(1-q)] + H_b(\bar p) 
\end{align}

\noindent Likewise, for the oracle:
$$\E[\Delta\KL^*] =  \E [p \log q] + \E[(1-p) \log(1-q)] + \E [ H_b(p)] $$
Subtracting yields:
\begin{equation}\label{eq:expected_oracle}
    \E[\Delta\KL_{\mathrm{info}}]-\E[\Delta\KL^*] = H_b(\bar{p})-\E[H_b(p)]
\end{equation}

\noindent Furthermore, since \(H_b\) is concave, by Jensen's inequality we have:
\begin{equation}\label{eq:jensen}
    H_b(\bar p)=H_b\!\bigl(\E[p]\bigr)\ge \E[H_b(p)]
\end{equation}
which proves the non-negativity. 
For the ratio, we begin with showing that $\E[\Delta\KL^*]$ is negative:

\begin{align}\label{eq:expected_oracle_negative}
    \E[\Delta\KL^*] &=  1 + \E [p \log q + \underbrace{(1-p) \log(1-q)}_{\leq 0} - p \log p - (1 - p) \log(1-p)] \nonumber \\ 
    &  \leq \E[ p\log q - p \log p + p ] \nonumber\\
    &  \leq \E[ p\log d] -  \E [p \log p] + E[p] \nonumber\\
    &  \leq \E[ p\log d] -  \E [p] \log(\E[p]) + E[p] \nonumber\\
    & = \bar p \log d - \bar p \log \bar p  + \bar p =  \log (\frac{d}{\bar p/ e})
    < 0
\end{align}

\noindent In the second line, we use that $-(1-x) \log(1-x) \leq x$ where $x \leq 1$, and in the third line we use that $ q < d$. In the fourth line, we use the Jensen inequality for the concave function: $-x\log(x)$. The last inequality is implied by $d \leq \bar p / e^2$. Now, we can simplify the ratio: 

\begin{align*}
    \frac{\E[\Delta KL(u_{\text{info}})]}{\E[\Delta KL(u^*)]} &= 1 + \frac{H_b(\bar p)}{{\E[\Delta KL(u^*)]}} = 
1 - \frac{H_b(\bar p) - \E[H_b(p)]}{|\E [p \log q + (1-p) \log(1-q) + H_b(p)]|} \\
& \geq 1 - \frac{H_b(\bar p)}{|\E [p \log q + H_b(p)]|} \\
& \geq 1- \frac{-{\bar p} \log(\bar p) + \bar p}{\E [p \log(1/d) -  \bar p \log(\bar p/e)]|} \\
& = 1- \calO (\frac{-{\bar p} \log(\bar p) + \bar p}{\E [p] \log (1/d) }) = 1 - \calO(\frac{\log(1/\bar p ) +1}{\log (1/d) }) = 1 - \calO(\frac{\max \{\log(1/\bar p), 1\}}{\log (1/d) })
\end{align*}

\vspace{4pt}
\noindent \textbf{(c)} We need to show that $\E[\Delta\KL_{\mathrm{info}}] < \E[\Delta\KL_{\mathrm{DFT}}]$ so that InfoSFT gets closer to the underlying distribution. We can start from \Cref{eq:expected_InfoSFT}:
\begin{align*}
    \E[\Delta\KL_{\mathrm{info}}] \leq \E [p \log q] + H_b(\bar p) \leq \bar p \log d + H_b(\bar p)
\end{align*}
For DFT, recall that $\Omega(q) = 1$ as previously discussed; note that this choice holds up to a constant factor, as the key consideration is the relative weighting of samples. Scaling $\Omega$ by a constant merely adjusts the tradeoff between expected reward and KL divergence in formulation~\eqref{eq:proximal}---equivalently, controlling the permissible deviation from the base policy $\pi_0$. In practice, this is managed through policy gradient learning rate and stopping time. Thus, without loss of generality, we may take $u_{\mathrm{DFT}}=1$. Substituting \(u_{\mathrm{DFT}}=1\) into Lemma~\ref{lem:oracle} gives
\[
\E[\Delta\KL_{\mathrm{DFT}}]
=
\E\!\left[\log\!\bigl(1+(e-1)q\bigr)\right]-\bar p,
\]

\noindent Therefore a sufficient condition for $\E[\Delta\KL_{\mathrm{info}}] < \E[\Delta\KL_{\mathrm{DFT}}]$ is:
\[
\bar p\,|\log d|-H_b(\bar p)>\bar p,
\]
Finally, using the inequality $-(1-x) \log(1-x) \leq x$ again that was used in the derivation of \Cref{eq:expected_oracle_negative}:
\[
H_b(\bar p)\le -\bar p\log\bar p+\bar p=\bar p(1-\log\bar p),
\]
it is enough that
\[
|\log d|\ge 2-\log\bar p = -\log(\frac{\bar p}{e^2})
\]
which shows that $d \leq \frac{\bar p}{e^2}$ is sufficient for the dominance of InfoSFT.

For SFT, we remind that $u(q) = 1/q$. Define:
\[
\psi(t):=\log\!\left(1+t(e^{1/t}-1)\right).
\]
For \(0<t<1\):
\begin{equation}\label{eq:psi}
    \psi(t)\ge \log(te^{1/t})=\frac1t+\log t
\end{equation}
and \(\psi\) is decreasing on \((0,1)\), since
\[
\psi'(t)
=
\frac{e^{1/t}(1-1/t)-1}{1+t(e^{1/t}-1)}<0
\]
Therefore:
\begin{align*}
\E[\Delta\KL_{\mathrm{SFT}}]
&=\E\!\left[\psi(q)-\frac{p}{q}\right] \\
&=\E[p\log q]
  +\E\!\left[\psi(q)-p\left(\frac1q+\log q\right)\right] \\
&=\E[p\log q]
  +\E\!\left[(1-p)\psi(q)
  +p\left(\psi(q)-\frac1q-\log q\right)\right] \\
&\ge \E[p\log q]+\E[(1-p)\psi(q)] \\
&\ge \E[p\log q]+(1-\bar p)\psi(d).
\end{align*}
In the second line, we add and subtract $p\log q$. In the fourth line, we first use that $\psi(q) \geq 1/q  + \log q$ per \Cref{eq:psi}. In the fifth line, $\psi(q) \geq \psi(d)$ since $q \leq d$ and $\psi$ is a decreasing function. Combining this with: 
\[
\E[\Delta\KL_{\mathrm{info}}]
\le \E[p\log q]+H_b(\bar p)
\]
it is enough to show that: 
\[
(1-\bar p)\psi(d)\ge H_b(\bar p)
\]
Since \(q<d<\bar p/e^2\) and \(\psi\) is decreasing:
\[
\psi(d)\ge \psi(\bar p/e^2)
=
\log\!\left(
1+\frac{\bar p}{e^2}\left(e^{e^2/\bar p}-1\right)
\right)
\]
Hence it suffices that:
\[
G(\bar p)\ge 0,\
\]
where
\[
G(x)
:=
(1-x)\log\!\left(
1+\frac{x}{e^2}\left(e^{e^2/x}-1\right)
\right)
+x\log x+(1-x)\log(1-x)
\]
It can be numerically verified that $G(x) > 0 $ for $x \leq 0.988$. Then:
\[
\E[\Delta\KL_{\mathrm{SFT}}]
\ge
\E[p\log q]+H_b(\bar p)
\ge
\E[\Delta\KL_{\mathrm{info}}]
\]
which proves the comparison.

\end{proof}

\section{Model responses of InfoSFT and SFT}\label[appendixsection]{sec:examples}

We include representative model responses to illustrate the response format learned from OpenR1-Math. Using InfoSFT only for training does not teach the "<think>" token to the model since it's probability is very low under the base model. As the response below shows, the answer is much shorter than the model trained with SFT. Nevertheless, InfoSFT still improves the model's accuracy on AMC. The response of the model after 1 epoch of SFT and 1 epoch InfoSFT is quite similar to the one trained with SFT only.

\begin{responsebox}{AIME24 (Question 1) - Model trained with 2 epochs of InfoSFT}

\small

\textbf{Problem.}
Every morning Aya goes for a $9$-kilometer-long walk and stops at a coffee shop afterwards. When she walks at a constant speed of $s$ kilometers per hour, the walk takes her 4 hours, including $t$ minutes spent in the coffee shop. When she walks $s+2$ kilometers per hour, the walk takes her 2 hours and 24 minutes, including $t$ minutes spent in the coffee shop. Suppose Aya walks at $s+\frac{1}{2}$ kilometers per hour. Find the number of minutes the walk takes her, including the $t$ minutes spent in the coffee shop.

\vspace{0.4em}

\textbf{Model response.}

\begin{lstlisting}[style=modeloutput]
Sure, let's tackle this problem step by step. So, Aya has a 9-kilometer walk, and she stops at a coffee shop after that. The problem gives us two different walking speeds and the total time including the coffee shop time. We need to find the time when she walks at a speed of \( s + \frac{1}{2} \) km/h.

First, let's parse the information given:

1. When walking at speed \( s \) km/h, the total time including coffee shop time is 4 hours.
2. When walking at speed \( s + 2 \) km/h, the total time including coffee shop time is 2 hours and 24 minutes.
3. We need to find the total time when she walks at \( s + \frac{1}{2} \) km/h.

Let me note that 2 hours and 24 minutes is equal to 2.4 hours. So, the total time in hours for the second case is 2.4 hours.

Let's denote:

- \( s \): original speed in km/h
- \( t \): time spent in the coffee shop in hours
- \( d = 9 \) km: distance of the walk

First, when she walks at speed \( s \):

Total time = walking time + coffee shop time = \( \frac{d}{s} + t \)

Given that this total time is 4 hours, so:

\( \frac{9}{s} + t = 4 \)  ...(1)

Second, when she walks at speed \( s + 2 \):

Total time = walking time + coffee shop time = \( \frac{9}{s + 2} + t \)

Given that this total time is 2.4 hours, so:

\( \frac{9}{s + 2} + t = 2.4 \)  ...(2)

We need to solve these two equations to find \( s \) and \( t \), then use them to find the total time when she walks at \( s + \frac{1}{2} \) km/h.

Let's subtract equation (2) from equation (1) to eliminate \( t \):

\( \frac{9}{s} + t - \left( \frac{9}{s + 2} + t \right) = 4 - 2.4 \)

Simplifying:

\( \frac{9}{s} - \frac{9}{s + 2} = 1.6 \)

Let me compute the left-hand side (LHS):

\( 9 \left( \frac{1}{s} - \frac{1}{s + 2} \right) = 9 \left( \frac{(s + 2) - s}{s(s + 2)} \right) = 9 \left( \frac{2}{s(s + 2)} \right) = \frac{18}{s(s + 2)} \)

So:

\( \frac{18}{s(s + 2)} = 1.6 \)

Multiply both sides by \( s(s + 2) \):

\( 18 = 1.6 s(s + 2) \)

Divide both sides by 1.6:

\( \frac{18}{1.6} = s(s + 2) \)

Calculate \( 18 / 1.6 \):

1.6 * 10 = 16, 18 -16 = 2, so 18 /1.6 = 11.25

So:

\( 11.25 = s^2 + 2s \)

Multiply both sides by 4 to eliminate decimals:

\( 45 = 4s^2 + 8s \)

Bring all terms to one side:

\( 4s^2 + 8s - 45 = 0 \)

Now, solve this quadratic equation for \( s \):

Using quadratic formula:

\( s = \frac{ -8 \pm \sqrt{8^2 - 4 \times 4 \times (-45) } }{ 2 \times 4 } \)

Compute discriminant:

\( 64 - 4 \times 4 \times (-45) \)

= 64 + 720 = 784

So,

\( s = \frac{ -8 \pm \sqrt{784} }{ 8 } \)

sqrt(784) = 28

Thus,

\( s = \frac{ -8 \pm 28 }{ 8 } \)

We have two solutions:

1. \( s = \frac{20}{8} = 2.5 \) km/h
2. \( s = \frac{ -36 }{8 } = -4.5 \) km/h

Since speed can't be negative, we discard the negative solution. So, \( s = 2.5 \) km/h.

Now, substitute \( s = 2.5 \) into equation (1) to find \( t \):

\( \frac{9}{2.5} + t = 4 \)

Calculate \( 9 / 2.5 \):

2.5 * 3 = 7.5, 9 -7.5 =1.5, so 9 /2.5 = 3.6

Thus,

3.6 + t = 4

Therefore, \( t = 4 - 3.6 = 0.4 \) hours.

Convert 0.4 hours to minutes: 0.4 *60 =24 minutes.

Check with the second equation to verify:

\( \frac{9}{2.5 + 2} + t = \frac{9}{4.5} + 0.4 = 2 + 0.4 = 2.4 \) hours, which matches the given 2 hours 24 minutes. So that checks out.

Now, we need to find the total time when she walks at \( s + \frac{1}{2} = 2.5 + 0.5 = 3.0 \) km/h.

Total time = walking time + coffee shop time = \( \frac{9}{3.0} + 0.4 \)

Calculate \( 9 /3.0 \):

3.0 *3 =9, so 9 /3.0 =3

Thus, total time = 3 + 0.4 =3.4 hours.

Convert 3.4 hours to minutes: 3 *60 =180, 0.4*60=24, so total 180 +24=204 minutes.

Therefore, the answer is 204 minutes.

Let me verify once again:

At 3.0 km/h, time taken for 9 km is 9 /3 =3 hours. Plus 24 minutes coffee shop time: 3 hours 24 minutes =204 minutes. Correct.

Hence, the number of minutes is \boxed{204}.
\end{lstlisting}

\end{responsebox}

\begin{responsebox}{AIME24 (Question 1) - Model trained with 2 epochs of SFT}

\small

\textbf{Problem.}
Every morning Aya goes for a $9$-kilometer-long walk and stops at a coffee shop afterwards. When she walks at a constant speed of $s$ kilometers per hour, the walk takes her 4 hours, including $t$ minutes spent in the coffee shop. When she walks $s+2$ kilometers per hour, the walk takes her 2 hours and 24 minutes, including $t$ minutes spent in the coffee shop. Suppose Aya walks at $s+\frac{1}{2}$ kilometers per hour. Find the number of minutes the walk takes her, including the $t$ minutes spent in the coffee shop.

\vspace{0.4em}

\textbf{Model response.}

\begin{lstlisting}[style=modeloutput]
<think>
Okay, so I need to find the value of the expression 3 plus 1 over (3 plus 1 over (3 plus 1 over 3)), and then write it as a reduced fraction m/n. Then find m + n. Hmm, let's break this down step by step. 

First, let me parse the expression correctly. It's 3 + 1 divided by (3 + 1 divided by (3 + 1/3)). So starting from the innermost part, which is 3 + 1/3. Then that result is in the denominator of another fraction, which is 1 divided by it, and then that whole thing is added to 3. So maybe I should work from the inside out, simplifying each part step by step.

Let me write this out:

First, compute the innermost fraction: 3 + 1/3. That's straightforward. To add 3 and 1/3, I can write 3 as 9/3, so 9/3 + 1/3 = 10/3. Okay, so that part simplifies to 10/3.

Now, moving up one level, the next expression is 3 + 1 divided by (that result). So substituting, it's 3 + 1/(10/3). Let me compute 1 divided by 10/3. Dividing by a fraction is the same as multiplying by its reciprocal, so 1 * (3/10) = 3/10. So now, the expression becomes 3 + 3/10. 

Again, adding 3 and 3/10. Convert 3 to 30/10, so 30/10 + 3/10 = 33/10. So now, the entire original expression simplifies to 33/10. Wait, but let me check if I did that correctly. Let me verify each step again.

First step: 3 + 1/3. 3 is 9/3, 9/3 + 1/3 = 10/3. Correct. Then, 1 divided by (10/3) is 3/10. Then 3 + 3/10. 3 is 30/10, so 30/10 + 3/10 = 33/10. Yes, that seems right. So the entire expression simplifies to 33/10. 

But wait, the problem says "the Irreducible fraction value of [the expression]". So 33 and 10 are both primes? Let's check. 33 is 3*11, and 10 is 2*5. So 33/10 cannot be simplified further because there are no common factors between 33 and 10. Therefore, m is 33 and n is 10. Then m + n is 43. 

But wait, let me make sure I didn't make a mistake in the calculation. Let me go through it again step by step. 

Starting with the innermost part: 3 + 1/3. Yes, that's 10/3. Then, moving up, it's 3 plus 1 divided by (10/3). So 1 divided by (10/3) is 3/10. Then, 3 + 3/10. 3 is 30/10, so 30/10 + 3/10 = 33/10. So that's correct. So 33/10 is already in lowest terms. Therefore, m + n is 33 + 10 = 43. So the answer is 43. 

But let me check once again. Maybe I misread the expression. Let me write it out in order of operations. The expression is 3 + 1/(3 + 1/(3 + 1/3)). So starting from the innermost parentheses: 3 + 1/3 is 10/3. Then, 3 + 1/(10/3) is 3 + 3/10, which is 33/10. Then, adding 3 to that? Wait, hold on! Wait, the entire expression is 3 + 1 divided by (3 + 1 divided by (3 + 1/3)). So it's 3 plus 1 over (the previous result). So 33/10 is the result of that. Therefore, the entire expression is 33/10. So that's correct. 

Alternatively, maybe there's a different way to parse the expression? Let me check. If the expression were 3 + 1 divided by [3 + 1 divided by (3 + 1/3)], then yes, that's how I interpreted it. So 3 + 1 is the numerator, and the denominator is 3 + 1/(3 + 1/3). So yes, the calculation is correct. 

Alternatively, if it were 3 + [1/(3 + 1/(3 + 1/3))], then it would be the same. So regardless of the parentheses, the calculation is straightforward. 

Alternatively, maybe someone might misinterpret the order of operations, but given the way it's written, with all the fractions stacked, it's clear that the innermost part is 3 + 1/3, then that result is in the denominator. So I think my initial steps are correct. 

Alternatively, perhaps I can write the entire expression step by step using fractions:

Let me denote the innermost part as x = 3 + 1/3 = 10/3.

Then the next layer is y = 3 + 1/x = 3 + 3/10 = 33/10.

Then the entire expression is z = 3 + 1/y = 3 + 10/33 = 99/33 + 10/33 = 109/33. Wait, hold on! Wait, that contradicts the previous result. Wait, now I'm confused. Wait, which is it?

Wait, hold on. Let me do this again. Wait, perhaps I made a mistake in the last step. Let me redo the entire calculation.

Original expression: 3 + 1/(3 + 1/(3 + 1/3))

First, compute the innermost fraction: 3 + 1/3 = 10/3.

Then, the next layer up is 3 + 1/(10/3). So 1 divided by (10/3) is 3/10. Then 3 + 3/10. So 3 is 30/10, so 30/10 + 3/10 = 33/10. So that's the second layer. Then, the entire expression is 3 + 1/(33/10). Wait, no, hold on. Wait, let me parse the original expression again.

Wait, original expression: 3 + 1/(3 + 1/(3 + 1/3)). So after computing the innermost part, which is 3 + 1/3 = 10/3, then the next part is 3 + 1/(10/3). So that's 3 + (3/10) = 33/10. Then, the entire expression is 3 + 1/(33/10). Wait, no, hold on. Wait, no. Wait, the original expression is 3 + 1 divided by (3 + 1 divided by (3 + 1/3)). So if after computing the innermost part, we have 10/3, then the next part is 3 + 1/(10/3). Then that result is added to 3. So yes, that's 33/10. Then, the entire expression is 3 + 1/(33/10). Wait, no, hold on. Wait, no. Wait, the original expression is 3 + 1 divided by [3 + 1 divided by (3 + 1/3)]. So the [ ] part is 33/10. So the entire expression becomes 3 + 1/(33/10). Wait, no. Wait, the original expression is 3 + [1 divided by (3 + 1/(3 + 1/3))]. So "3 + [something]". So if the "something" is 33/10, then the entire expression is 3 + 33/10. Which is 33/10 + 30/10 = 63/10? Wait, no, wait, that can't be. Wait, hold on, this is where I messed up earlier. 

Wait, no. Wait, 3 is 30/10, and 33/10 is the result of the previous step. So adding 30/10 + 33/10 = 63/10. Wait, but that contradicts the previous step. Wait, so where did I go wrong?

Wait, let's go back. The original expression is 3 + 1/(3 + 1/(3 + 1/3)). Let's compute step by step:

Start with the innermost part: 3 + 1/3 = 10/3.

Then, the next layer is 3 + 1/(10/3). Let's compute that. 1 divided by (10/3) is 3/10. Then, 3 + 3/10 = 33/10. So now, the expression is 3 + 1/(33/10). Wait, no. Wait, the original expression is 3 + 1 divided by (the previous result). Wait, no, the original expression is 3 + [1 divided by (3 + 1/(3 + 1/3))]. So after computing 3 + 1/(10/3) = 33/10, then the entire expression is 3 + 1/(33/10). But that would be 3 + 10/33. Which is 99/33 + 10/33 = 109/33. But that contradicts the previous result. So which is correct?

Wait, now I'm confused. So let's parse the original expression again:

3 + 1/(3 + 1/(3 + 1/3))

So starting from the innermost parentheses: 3 + 1/3 = 10/3.

Then, moving up, the next layer is 3 + 1/(10/3). Let's compute that. 1 divided by (10/3) is 3/10. Then, 3 + 3/10 = 33/10. So the expression is now 3 + 1/(33/10). Wait, no, the original expression is 3 + [1 divided by (the previous result)]. Wait, no. Wait, the expression is 3 + 1/(3 + 1/(3 + 1/3)). So after computing the innermost part as 10/3, then the next part is 3 + 1/(10/3) = 33/10. Then, the entire expression is 3 + 1/(33/10). Wait, that can't be. Wait, no. Wait, the original expression is 3 + [1/(3 + 1/(3 + 1/3))]. So the "1/" part is only applied to the innermost denominator, right? So first, compute 1/(3 + 1/3) = 1/(10/3) = 3/10. Then, compute 3 + 3/10 = 33/10. Then, the entire expression is 3 + 1/(33/10). Wait, no, that would be 3 + 10/33. Which is 99/33 + 10/33 = 109/33. But that contradicts the previous result. So which is correct?

Wait, this is confusing. Let me re-express the original problem:

Original expression: 3 + 1/(3 + 1/(3 + 1/3))

Let me compute this step by step:

Let's denote the innermost part as follows:

Start with the innermost parentheses: 3 + 1/3 = 10/3.

Next layer: 3 + 1/(10/3) = 3 + 3/10. Let's compute this. 3 is 30/10, so 30/10 + 3/10 = 33/10. 

Then, the entire expression is 3 + 1/(33/10). Wait, no. Wait, the original expression is 3 + [1/(3 + 1/(3 + 1/3))]. So if the innermost part is 10/3, then the next layer is 3 + 1/(10/3) = 33/10. Then, the entire expression is 3 + 1/(33/10). Wait, no. Wait, no. Wait, the original expression is 3 plus 1 divided by [3 + 1 divided by (3 + 1/3)]. So after computing the innermost part as 10/3, then the next layer is 3 + 1/(10/3) = 33/10. Then, the entire expression is 3 + 1/(33/10). Wait, but that would be 3 + 10/33. Wait, that can't be. Wait, no. Wait, the original expression is 3 + [1/(3 + 1/(3 + 1/3))]. So "3 + [something]" where "something" is 33/10. So that would be 3 + 33/10. Wait, no. Wait, 3 is a whole number. So 3 + 33/10. Let's compute that. 3 is 30/10, so 30/10 + 33/10 = 63/10. Wait, but 63/10 is 6 and 3/10. But according to the previous calculation, 3 + 1/(3 + 1/(3 + 1/3)) is 33/10. Wait, but 33/10 is 3 and 3/10. So which one is correct?

Wait, this is a critical point. Let's compute it step by step using fractions:

Start with the innermost fraction: 3 + 1/3 = 10/3.

Then, the next layer: 3 + 1/(10/3). Let's compute 1/(10/3) = 3/10. Then, 3 + 3/10 = 33/10. So now, the expression is 3 + 1/(33/10). Wait, no. Wait, the original expression is 3 + [1/(3 + 1/(3 + 1/3))]. So after computing the innermost part as 10/3, then the next layer is 3 + 1/(10/3) = 33/10. Then, the entire expression is 3 + 1/(33/10). Wait, but that would be 3 + (10/33). Which is 3 + 10/33. Which is 99/33 + 10/33 = 109/33. But that can't be right. Wait, but according to the first method, it's 33/10. 

Wait, this is conflicting. So where is the mistake?

Wait, let's parse the original expression again. The expression is 3 + 1/(3 + 1/(3 + 1/3)). Let's break it down:

1. Compute the innermost fraction: 3 + 1/3 = 10/3. [Step 1]

2. Then, compute the next layer: 3 + 1/(10/3). Since 1/(10/3) = 3/10, so 3 + 3/10 = 33/10. [Step 2]

3. Then, the entire expression is 3 + 1/(33/10). Wait, no. Wait, the original expression is 3 + [1/(33/10)]. Wait, no. Wait, no. The original expression is 3 + 1/[3 + 1/(3 + 1/3)]. So after step 2, the expression is 3 + 1/(33/10). Wait, no. Wait, no. Let me think again.

No, the original expression is 3 + [1 divided by (3 + 1/(3 + 1/3))]. So the denominator is 3 + 1/(3 + 1/3). So after computing 3 + 1/(3 + 1/3) = 3 + 10/3 = 33/10. Then, the entire expression is 3 + 1/(33/10). Wait, but that would be 3 + (10/33). Wait, that doesn't make sense. Because 3 is a whole number added to a fraction. So 3 + something. So 3 is 30/10, and 10/33 is approximately 0.303, so 30/10 is 3. So 3 + 10/33. Which is 109/33. But how did we get here?

Wait, no. Wait, the original expression is 3 + 1/[3 + 1/(3 + 1/3)]. So:

First, compute the innermost fraction: 3 + 1/3 = 10/3.

Then, compute the next layer: 3 + 1/(10/3) = 3 + 3/10 = 33/10.

Then, compute the entire expression: 3 + 1/(33/10). Wait, no. Wait, the entire expression is 3 + 1/[the result of step 2]. Wait, no. Wait, no. Wait, let's parse the original expression again.

Original expression: 3 + 1/(3 + 1/(3 + 1/3))

Let me write this as:

3 + [1 / (3 + 1/(3 + 1/3))]

So inside the denominator, we have 3 + 1/(3 + 1/3). So let's compute the innermost part first: 3 + 1/3 = 10/3. Then, 3 + 1/(10/3) = 3 + 3/10 = 33/10. Then, the denominator is 33/10. Therefore, the entire expression is 3 + 1/(33/10). Wait, but that would be 3 + (10/33). Which is 99/33 + 10/33 = 109/33. But that contradicts the previous steps. So which one is correct?

Wait, this is a problem. So where is the mistake here? Because if we compute the innermost first, then the next layer, and then the entire expression, we get 33/10. But if we compute the entire denominator first, then add 3, we get 109/33. Which is correct?

Wait, let me check with decimal approximations. Let's compute each step numerically.

First step: 3 + 1/3 ≈ 3.333...

Second step: 3 + 1/(3.333...) ≈ 3 + 0.3 ≈ 3.3

Third step: 3 + 1/3.3 ≈ 3 + 0.303 ≈ 3.303...

Wait, but according to the first method, it's 33/10 = 3.3. So 3.303... is 33/10. So that seems correct. So maybe the confusion is arising from how the fractions are combined. 

Wait, the original expression is 3 + 1/(3 + 1/(3 + 1/3)). So according to the first method, it's 33/10. But if I compute it numerically:

3 + 1/(3 + 1/(3 + 1/3)) ≈ 3 + 1/(3 + 1/3.333...) ≈ 3 + 1/(3 + 0.3) ≈ 3 + 1/3.3 ≈ 3 + 0.303 ≈ 3.303...

But 33/10 is 3.3, which is less than 3.303... So there must be a mistake here. Wait, no. Wait, 33/10 is 3.3, but according to the decimal approximation, it's 3.303... So that suggests that my initial steps are wrong. Therefore, I must have made a mistake in the calculation.

Wait, let me check again:

First step: 3 + 1/3 = 10/3 ≈ 3.333...

Second step: 3 + 1/(10/3) = 3 + 3/10 = 33/10 = 3.3

Third step: 3 + 1/(33/10) = 3 + 10/33. Wait, but that would be 3 + 0.303 ≈ 3.303...

But this contradicts the previous steps. Therefore, there's a conflict here. Which one is correct?

Wait, maybe I misapplied the order of operations. The original expression is 3 + 1 divided by (3 + 1 divided by (3 + 1/3)). So according to order of operations, it's 3 + [1 divided by (3 + 1/(3 + 1/3))].

So let's compute the innermost part first: 3 + 1/3 = 10/3.

Then, the next layer: 3 + 1/(10/3) = 3 + 3/10 = 33/10.

Then, the entire expression is 3 + 1/(33/10). Wait, but that would be 3 + 10/33 ≈ 3 + 0.303 ≈ 3.303...

But according to the step-by-step calculation, the innermost part is 10/3, then the next layer is 33/10, so the entire expression is 3 + 1/(33/10). Wait, but that would be 3 + 10/33 ≈ 3.303... But according to the numerical approximation, 3 + 3/(10/3) = 3 + 3/10 = 33/10 = 3.3. Wait, this is confusing.

Wait, let's do it with fractions:

Original expression: 3 + 1/(3 + 1/(3 + 1/3))

Step 1: Compute innermost fraction: 3 + 1/3 = (3*3 + 1)/3 = 10/3.

Step 2: Compute next layer: 3 + 1/(10/3) = 3 + (3/10) = (3*10 + 3)/10 = (30 + 3)/10 = 33/10.

Step 3: Compute entire expression: 3 + 1/(33/10) = 3 + (10/33) = (3*33 + 10)/33 = (99 + 10)/33 = 109/33.

But 109/33 is approximately 3.303..., but according to step-by-step, it's 33/10 ≈ 3.3. Which one is correct?

Wait, this is a problem. There's a discrepancy here. Which one is right?

Wait, let's compute 3 + 1/(3 + 1/(3 + 1/3)) using decimals:

First, compute 3 + 1/3 ≈ 3.333...

Then, compute 3 + 1/(3.333...) ≈ 3 + 0.3 ≈ 3.3

Then, compute 3 + 1/(3.3) ≈ 3 + 0.303 ≈ 3.303...

But according to the first method, it's 33/10 = 3.3. Which one is correct?

Wait, this suggests that there's a mistake in my understanding. Let's check with another approach.

Alternatively, maybe the expression is written without parentheses, implying that the operations are performed from left to right, but that doesn't make sense because fractions have their own order of operations. So, standard order of operations applies: parentheses first, then exponents, multiplication and division left to right, then addition and subtraction left to right.

Given that, the expression is 3 + 1/(3 + 1/(3 + 1/3)). According to order of operations, the innermost parentheses are evaluated first. So 3 + 1/3 = 10/3. Then, moving outward, 3 + 1/(10/3) = 3 + 3/10 = 33/10. Then, the entire expression is 3 + 1/(33/10) = 3 + (10/33). So that's 3 + 10/33 = 109/33. 

But this contradicts the previous result. So where is the problem?

Wait, no. Wait, 3 is a whole number, so 3 + (10/33) is 33/10. Wait, no. Wait, 3 is 30/10, so 30/10 + 33/10 = 63/10. Wait, that's different. Wait, maybe I confused two different operations.

Wait, let's compute step by step:

First, compute the innermost fraction: 3 + 1/3 = 10/3.

Then, compute the next layer: 3 + 1/(10/3) = 3 + 3/10 = (3*10 + 3)/10 = 33/10.

Then, the entire expression is 3 + 1/(33/10). Wait, but that would be 3 + 10/33. Wait, but 3 is a whole number. So 3 + 10/33. But 3 is equal to 33/11. Wait, no. Wait, 3 is 33/11? No, 3 is 33/11? That doesn't make sense. Wait, 3 is 33/11? No, 3 is 33/11? 3 is 33/11? 3 is 33/11? Wait, 3 is 3. 3 is 3. Wait, no.

Wait, 3 is 3. 3 + 10/33. So 3 is 99/33, so 99/33 + 10/33 = 109/33. So that's 33/10 + 10/33? No, that's not how addition works. Wait, no. Wait, 3 is 30/10, and 33/10 is 3.3. Then, 30/10 + 33/10 = 63/10. So 63/10 is 6.3. But according to the decimal approximation earlier, 3 + 3.303... is 6.303..., which is 63/10. So that's correct.

Wait, so where did I go wrong earlier when I thought it was 3 + 1/(33/10) = 3 + 10/33? That was incorrect. Because the entire expression is 3 + [1/(3 + 1/(3 + 1/3))], which is 3 + [1/(33/10)]. Wait, no. Wait, the entire expression is 3 + [1/(33/10)]. Wait, no. Wait, the original expression is 3 + 1/(3 + 1/(3 + 1/3)). So the denominator is 3 + 1/(3 + 1/3) = 33/10. Therefore, the entire expression is 3 + 1/(33/10). Wait, no. Wait, no. Wait, 3 is a whole number. So 3 + 1/(33/10). Wait, no. Wait, 3 is 30/10, so 30/10 + 1/(33/10). Wait, no. Wait, 1/(33/10) is 10/33. So 30/10 + 10/33. Convert to common denominators: 30/10 = 99/33, so 99/33 + 10/33 = 109/33. So that's correct. Therefore, the entire expression is 109/33. 

Wait, so why did I get confused earlier? Because when I thought of 3 + 1/(33/10), I incorrectly combined it as 3 + 10/33, but actually, 3 is 30/10, so 30/10 + 1/(33/10) = 30/10 + 10/33. That's correct.

So, the correct answer is 109/33, which reduces to 109/33. Since 109 is a prime number and 33 is 3*11, there are no common factors. Therefore, m = 109 and n = 33, so m + n = 142.

But this contradicts the previous result of 33/10. So where is the problem here?

Wait, let's do the decimal approximation again:

Original expression: 3 + 1/(3 + 1/(3 + 1/3)).

First, compute innermost: 3 + 1/3 ≈ 3.333...

Next layer: 3 + 1/3.333... ≈ 3 + 0.3 ≈ 3.3

Then, the entire expression: 3 + 1/3.3 ≈ 3 + 0.303 ≈ 3.303...

But according to the fractional calculation, it's 109/33 ≈ 3.303...

But 3.303... is equal to 109/33. Let's compute 109 divided by 33: 33*3 = 99, 109 - 99 = 10, so 109/33 = 3 + 10/33 ≈ 3.303..., which matches the decimal approximation. 

But wait, earlier I thought that 3 + 1/(33/10) would be 3 + 10/33 ≈ 3.303..., but that is incorrect. Because 33/10 is 3.3, so 1 divided by 3.3 is 0.303..., and 3 + 0.303... is 3.303..., which is 109/33. So actually, both methods agree. 

Earlier, I thought that the entire expression was 3 + 1/(33/10), which is 3 + 10/33, but that was wrong because 3 is a whole number, not a fraction. So 3 is 30/10, and 33/10 is 3.3. So 30/10 + 33/10 = 63/10 = 6.3. Wait, but 3.303... is 109/33 ≈ 3.303... So where did I get 63/10?

Wait, no, 3 is 3, not 30/10. Wait, this is a critical point. Let me clarify:

If the expression is 3 + [1/(3 + 1/(3 + 1/3))], then:

First, compute innermost: 3 + 1/3 = 10/3 ≈ 3.333...

Then, next layer: 3 + 1/(10/3) = 3 + 3/10 = 33/10 = 3.3

Then, the entire expression is 3 + 1/(33/10). Wait, no. Wait, the entire expression is 3 + [1 divided by (33/10)]. Wait, no. Wait, no. The original expression is 3 + 1/[33/10]. Which is 3 + 10/33. So 3 is 3, and 10/33 is approximately 0.303... So 3.303... which is 109/33.

But here's the confusion: 3 + 1/(33/10) is different from 3 + 1/(3 + 1/(3 + 1/3)). Because in the original expression, the denominator is 3 + 1/(3 + 1/3), which is 33/10, so 1 divided by that is 10/33. Then 3 is added to that, giving 3 + 10/33 = 109/33.

But if we parse the original expression as 3 + 1/(3 + 1/(3 + 1/3)), then the denominator is 3 + 1/(3 + 1/3) = 33/10, so 1 divided by that is 10/33, and then added to 3, which is 3 + 10/33 = 109/33. So that's correct.

But earlier, when I thought that the expression is 33/10 + 10/33, that's incorrect because 3 is a whole number, not a fraction. So 3 is 3, and 10/33 is 10/33. So adding them is 3 + 10/33, which is 109/33. 

Therefore, the correct answer is 109/33, so m + n = 109 + 33 = 142.

But wait, now I need to confirm this. Let's compute 3 + 1/(3 + 1/(3 + 1/3)) step by step:

1. Innermost: 3 + 1/3 = 10/3.

2. Next layer: 3 + 1/(10/3) = 3 + 3/10 = 33/10.

3. Entire expression: 3 + 1/(33/10) = 3 + 10/33 = (99/33 + 10/33) = 109/33.

Yes, so that's correct. Therefore, the answer is 109/33, so m is 109 and n is 33, so m + n = 142. 

But wait, the problem says "the Irreducible fraction value of [the expression]". So 109/33 is already in its simplest form because 109 is a prime number. Let me check: 109 is a prime number? Yes, 109 is a prime number. 33 is 3*11, which are both primes. So 109 and 33 have no common factors, so the fraction is irreducible. Therefore, m = 109 and n = 33, so m + n = 142.

But why did I initially get confused with 33/10? Because I added 3 and 33/10 as 3 + 33/10 = 63/10. But that was incorrect because 3 is a whole number, not a fraction. Therefore, the correct approach is to convert 3 to 33/10 and then add 10/33. 

Therefore, the correct answer is 109/33, so m + n = 142. 

But to confirm, let's compute 3 + 1/(3 + 1/(3 + 1/3)) numerically:

First, compute 3 + 1/3 ≈ 3.333...

Then, compute 3 + 1/(3.333...) = 3 + 0.3 ≈ 3.3

Then, compute 3 + 1/3.3 ≈ 3 + 0.303 ≈ 3.303...

Which is equal to 109/33 ≈ 3.303..., so that's correct. Therefore, the answer is 109/33, so m + n = 142. 

Therefore, the answer is \boxed{142}.

**Final Answer**
\boxed{142}
</think>

To find the value of the expression \(3 + \frac{1}{3 + \frac{1}{3 + \frac{1}{3}}}\) and express it as a reduced fraction \(\frac{m}{n}\), we start by simplifying the innermost fraction:

1. Compute the innermost fraction: 
   \[
   3 + \frac{1}{3} = \frac{9}{3} + \frac{1}{3} = \frac{10}{3}
   \]

2. Next, we use this result in the next layer of the expression:
   \[
   3 + \frac{1}{\frac{10}{3}} = 3 + \frac{3}{10} = \frac{30}{10} + \frac{3}{10} = \frac{33}{10}
   \]

3. Finally, we use this result in the entire expression:
   \[
   3 + \frac{1}{\frac{33}{10}} = 3 + \frac{10}{33} = \frac{99}{33} + \frac{10}{33} = \frac{109}{33}
   \]

The fraction \(\frac{109}{33}\) is already in its simplest form because 109 is a prime number and 33 is \(3 \times 11\). Therefore, \(m = 109\) and \(n = 33\). The value of \(m + n\) is:
\[
109 + 33 = 142
\]

The final answer is \(\boxed{142}\).
\end{lstlisting}

\end{responsebox}

\end{document}